\def\eqref#1{eq.~\ref{#1}}
\def\1{\bm{1}}
\def\vzero{{\bm{0}}}
\def\vs{{\bm{s}}}
\def\vu{{\bm{u}}}
\def\vv{{\bm{v}}}
\def\vx{{\bm{x}}}
\def\vy{{\bm{y}}}
\def\vz{{\bm{z}}}
\def\mA{{\bm{A}}}
\def\mD{{\bm{D}}}
\def\mI{{\bm{I}}}
\def\mO{{\bm{O}}}
\def\mP{{\bm{P}}}
\def\mS{{\bm{S}}}
\def\mU{{\bm{U}}}
\def\mV{{\bm{V}}}
\def\mW{{\bm{W}}}
\def\mX{{\bm{X}}}
\def\mY{{\bm{Y}}}
\DeclareMathAlphabet{\mathsfit}{\encodingdefault}{\sfdefault}{m}{sl}
\SetMathAlphabet{\mathsfit}{bold}{\encodingdefault}{\sfdefault}{bx}{n}
\def\gN{{\mathcal{N}}}
\newcommand{\R}{\mathbb{R}}
\newcommand{\vect}{\mathrm{vec}}
\newcommand{\diag}{\mathrm{diag}}
\newtheorem{theorem}{Theorem}
\newtheorem{lemma}[theorem]{Lemma}
\definecolor{lightcolor}{HTML}{942F67}
\definecolor{darkcolor}{HTML}{7268A6}
\title{BLAST: Block-Level Adaptive Structured Matrices for Efficient Deep Neural Network Inference}
\author{%
   Changwoo Lee \quad Soo Min Kwon \quad Qing Qu \quad Hun-Seok Kim\\
   University of Michigan\\
   \texttt{\{cwoolee,kwonsm,qingqu,hunseok\}@umich.edu} \\
}
\begin{document}

\maketitle

\begin{abstract}

Large-scale foundation models have demonstrated exceptional performance in language and vision tasks. However, the numerous dense matrix-vector operations involved in these large networks pose significant computational challenges during inference. To address these challenges, we introduce the Block-Level Adaptive STructured (BLAST) matrix, designed to learn and leverage efficient structures prevalent in the weight matrices of linear layers within deep learning models. 
Compared to existing structured matrices, the BLAST matrix offers substantial flexibility, as it can represent various types of structures that are either learned from data or computed from pre-existing weight matrices.
We demonstrate the efficiency of using the BLAST matrix for compressing both language and vision tasks, showing that (i) for medium-sized models such as ViT and GPT-2, training with BLAST weights boosts performance while reducing complexity by 70\% and 40\%, respectively; and (ii) for large foundation models such as Llama-7B and DiT-XL, the BLAST matrix achieves a 2x compression while exhibiting the lowest performance degradation among all tested structured matrices.
Our code is available at \url{https://github.com/changwoolee/BLAST}.

\end{abstract}

\section{Introduction}\label{sec:introduction}

Foundation models built on large deep neural networks (DNNs) have demonstrated remarkable performance in vision and language tasks.
However, the size of these large networks poses both computational and storage challenges, especially in resource-constrained environments such as edge devices.
The size of a single DNN often exceeds the capacity of the supporting hardware devices~\citep{touvron2023llama,kirillov2023segment,rombach2022high,achiam2023gpt,radford2021learning}.
For example, Llama-70B \citep{touvron2023llama} demands at least 140GB of memory solely for loading its weights in half-precision floating point representation, while the state-of-the-art commercial GPU only accommodates 80GB of memory. 
Furthermore, inference with these networks involves numerous dense matrix-vector operations, which can be limiting when computing power is constrained.

Fortunately, large (overparameterized) DNNs often exhibits parameter redundancy, where the intrinsic dimension of the weights is much lower than the ambient dimension. As such, the weights should be \textit{structured}, possessing hidden properties such as low-rankness~\citep{huh2022low,yaras2023law,kwon2024efficient,wang2023understanding} or sparsity~\citep{frankle2018lottery,frantar2023sparsegpt}. Hence, it is possible to replace (or factorize) these dense existing weight matrices with structured ones without degrading performance \citep{frankle2018lottery,frantar2023sparsegpt,lee2024differentiable}.
However, using structured matrices that do not align with the true underlying structure of the weight matrices can result in significant performance degradation. %
We demonstrate this point in Figure~\ref{fig:dit_motivation} where we attempt to capture the structure of a diffusion model transformer (DiT)~\citep{peebles2023scalable} using the low-rank structure to generate synthetic images. 
In Figure~\ref{fig:dit_motivation}, we compress the model's linear layers by approximately 50\% of the total number of parameters using low-rank weight matrices via singular value decomposition (SVD) and generate images with the compressed model (see \Cref{sec:compression-experiment} and Appendix~\ref{sec:hyperparam-comp-retrain} for details). As shown in Figure~\ref{fig:dit_motivation} (middle), simply using the low-rank structure introduces unwanted artifacts in the generated images.

To address this issue, many flexible structures for modeling DNN weights have been proposed to minimize the misalignment between imposed and true low-dimensional structures. For example,  \citet{dao2022monarch} proposed the Monarch matrix, a specific type of Block Low-Rank (BLR) structure \citep{amestoy2015improving}, in which all blocks share the same rank, intended for use in the linear layers of transformers \citep{vaswani2017attention}. Matrix multiplication with a Monarch matrix can be performed efficiently using batched matrix multiplication routines. Additionally, \citet{chen2022pixelated} investigated a block sparse plus low-rank structure. 
However, all of these methods still suffer from the fact that the underlying structure of each weight matrix is not known a priori. By imposing one of these structures, performance degradation may still occur due to misalignment.
Recently, \citet{lee2024differentiable} introduced a data-driven design called Generalized Block Low-Rank (GBLR). This approach employs multiple rank-1 blocks with various sizes and locations learned from data via differentiable masks. Unfortunately, the GBLR matrix is optimized for custom-designed hardware, as the learned block patterns are random. It has limited usability on general GPUs as the computation of GBLR matrices does not accelerate well on them.

\begin{figure}[t]
    \centering
    \includegraphics[width=\textwidth]{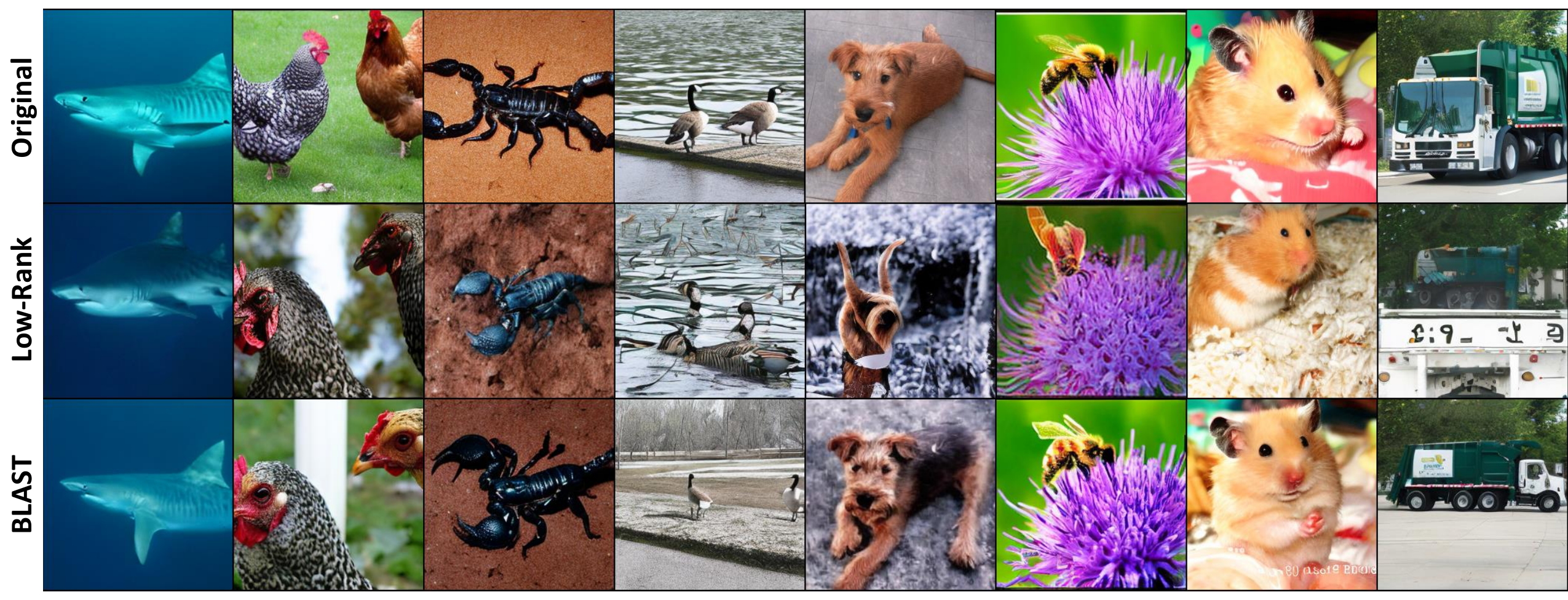}
    \caption{Examples of generated images using DiT \citep{peebles2023scalable} starting from the same noise vectors and a deterministic solver. The original model is compressed by 50\% through BLAST or low-rank matrices and re-trained for 10 epochs on ImageNet. The images from the model compressed via BLAST preserve the quality of the images of the original model, whereas the images generated by the low-rank model contain more undesired artifacts.}
    \label{fig:dit_motivation}
\end{figure}

In this work, we introduce the Block-Level Adaptive Structured (BLAST) matrix, a versatile and efficient design tailored to uncover various low-dimensional structures in the weight matrices of DNNs for accelerated inference on GPUs. Our matrix structure leverages shared bases across block matrices with block-wise diagonal coupling factors. This structure encapsulates different structures such as low-rank, block low-rank, block-diagonal matrices, and their combinations. BLAST matrices can be applied to the training scenario from scratch or compression after training. 
For training from scratch, we let the linear layers of the DNN to directly adopt the BLAST structure and learn its factors from data.
The factors of the BLAST matrix are constructed to have well-defined gradients, allowing them to be optimized using popular methods like stochastic gradient descent (SGD) or Adam \citep{kingma2014adam}. 
For compressing existing weights, we propose a factorization algorithm to learn the BLAST factors from pre-trained weights. The compression performance can be further improved by updating the BLAST factors using data, a process we call ``re-training''. 

We demonstrate the efficiency of BLAST by training Vision Transformers (ViT) \citep{dosovitskiy2020image} and GPT-2 \citep{radford2019language} from scratch on various datasets, showing that it can reduce complexity by 70\% and 40\%, respectively.
We also compress existing ViT and Diffusion Transformer (DiT) \citep{peebles2023scalable} models with BLAST matrices by 70\% and 50\%, respectively, demonstrating that BLAST compression (and re-training) achieves higher accuracy / quality compared to existing methods for ViT and DiT (see \Cref{fig:dit_motivation}). For the language tasks, we compress Llama-7B \citep{touvron2023llama} by 50\% via BLAST and re-train on 0.49B tokens, showing the lowest accuracy degradation with significant inference speedup on a NVIDIA A100 GPU.
Overall, our contributions can be summarized as follows:
\begin{itemize}[leftmargin=*]
    \item We propose a novel block-structured matrix called BLAST that encompasses a wide range of matrix structures, allowing for \textit{faster matrix multiplication}. 
    Various existing structured matrices such as Low-Rank, Monarch \citep{dao2022monarch}, and Block Diagonal matrices can be expressed using the BLAST matrix.

    \item  We provide gradient descent-based methods to find the BLAST factors for DNN weights. We empirically show that standard DNN training with the BLAST weight matrices effectively recovers the original accuracy while achieving up to a 70\% reduction in computational complexity.
    
    \item In cases where pre-trained dense weights are available, we propose a preconditioned gradient descent factorization algorithm to decompose the weights to BLAST factors for compression and further re-training. Our experimental results show that pre-trained foundation models for vision or language tasks can be compressed by 50\% using BLAST matrices.

\end{itemize}

\paragraph{Notation and Organization.} We use $\sigma_1(\mX)$ to denote the largest singular value of the matrix $\mX$. The notation $\odot$ indicates Hadamard product.

The rest of the paper is organized as follows.
In \Cref{sec:blast}, we introduce the BLAST matrix and discuss its properties. In \Cref{sec:blast-learning}, we propose a methodology to train/compress DNNs with BLAST weight matrices. In \Cref{sec:experimental-results}, we demonstrate the effectiveness of the BLAST weights in improving efficiency without noticeable accuracy degradation. We discuss related works in \Cref{sec:related-works}, and conclude in \Cref{sec:conclusion}.

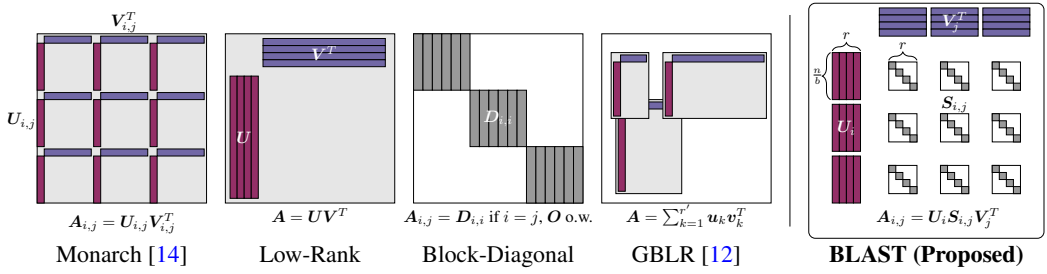
\begin{figure}[t]
    \centering
    \resizebox{\linewidth}{!}{
        \begin{tikzpicture}
\usetikzlibrary{positioning,decorations.pathreplacing,calc}
\tikzset{
    U_block_single/.pic={
        \draw[fill=lightcolor] (0.0, 0.0) rectangle (0.15, 1.0);
    }
}

\tikzset{
    U_block_r/.pic={
        \pic at (0,0) {U_block_single};
        \pic at (0,1.1) {U_block_single};
        \pic at (0,2.2) {U_block_single};
    }
}

\tikzset{
    U_factors/.pic={
        \pic at (0,0) {U_block_r};
        \pic at (0.15,0) {U_block_r};
        \pic at (0.3,0) {U_block_r};
        \pic at (0.45,0) {U_block_r};
    }
}

\tikzset{
    Vt_block_single/.pic={
        \draw[fill=darkcolor] (0,0) rectangle (1.0, 0.15);
    }
}

\tikzset{
    Vt_block_r/.pic={
        \pic at (0,0) {Vt_block_single};
        \pic at (1.1,0) {Vt_block_single};
        \pic at (2.2,0) {Vt_block_single};
    }
}

\tikzset{
    Vt_factors/.pic={
        \pic at (0,0) {Vt_block_r};
        \pic at (0,-0.15) {Vt_block_r};
        \pic at (0,-0.3) {Vt_block_r};
        \pic at (0,-0.45) {Vt_block_r};
    }
}

\tikzset{
    S_ij/.pic={
        \draw[] (0.0, 1.0) rectangle (0.6, 0.4);
        \draw[fill=black!40] (0,1) rectangle (0.15, 0.85);
        \draw[fill=black!40] (0.15,0.85) rectangle (0.3, 0.7);
        \draw[fill=black!40] (0.3,0.7) rectangle (0.45, 0.55);
        \draw[fill=black!40] (0.45,0.55) rectangle (0.6, 0.4);
    }
}

\tikzset{
    S_mat/.pic={
        \pic at (0.2, -0.2) {S_ij};
        \pic at (1.3, -0.2) {S_ij};
        \pic at (2.4, -0.2) {S_ij};
        \pic at (0.2, 0.9) {S_ij};
        \pic at (1.3, 0.9) {S_ij};
        \pic at (2.4, 0.9) {S_ij};
        \pic at (0.2, 2.0) {S_ij};
        \pic at (1.3, 2.0) {S_ij};
        \pic at (2.4, 2.0) {S_ij};
        
        \draw[decorate, decoration={brace, amplitude=5pt,raise=1pt}] 
        (0.2, 3.0) -- (0.8,3.0) node [midway,yshift=10pt] {\small $r$};
    }
}

\tikzset{
    blast_pre/.pic={
        \pic {U_factors};
        \pic at (1.0, 4.0) {Vt_factors};
        \pic at (1.0, 0.0) {S_mat};
        \node  at (0.3, 1.6) {\color{white} $\mU_i$};
        \node at (2.6,3.85) {\color{white} $\mV_j^T$};
        \node [above] at (2.6, 1.8) {{$\mS_{i,j}$}};

        \draw[decorate, decoration={brace, amplitude=5pt,raise=0pt,mirror},xshift=-1pt] 
        (0.0, 3.2) -- (0.0,2.2) node [midway,xshift=-8pt] {\small $\frac{n}{b}$};

        \draw[decorate, decoration={brace, amplitude=5pt,raise=1pt}] 
        (0.0, 3.2) -- (0.6,3.2) node [midway,yshift=10pt] {\small $r$};
    }
}

\tikzset{
    blast/.pic={
        \draw[rounded corners] (0.0,0.0) rectangle (5.0,5.0);
        \pic at (0.5, 0.75) {blast_pre};
        \node[align=center, anchor=base] at (2.75, 0.35) {$\mA_{i,j}=\mU_i\mS_{i,j}\mV_{j}^T$};
    }
}

\tikzset{
    lr_U_r/.pic={
        \draw[fill=lightcolor] (0.0, 0.0) rectangle (0.15, 2.6);
    }
}
\tikzset{
    lr_U/.pic={
        \pic at (0.0, 0.0) {lr_U_r};
        \pic at (0.15, 0.0) {lr_U_r};
        \pic at (0.3, 0.0) {lr_U_r};
        \pic at (0.45, 0.0) {lr_U_r};
        \node  at (0.3, 1.3) {\color{white} $\mU$};
    }
}
\tikzset{
    lr_Vt_r/.pic={
        \draw[fill=darkcolor] (0,0) rectangle (2.6, 0.15);
    }
}
\tikzset{
    lr_Vt/.pic={
        \pic at (0.0, 0.0) {lr_Vt_r};
        \pic at (0.0, 0.15) {lr_Vt_r};
        \pic at (0.0, 0.30) {lr_Vt_r};
        \pic at (0.0, 0.45) {lr_Vt_r};
        \node  at (1.3, 0.3) {\color{white} $\mV^T$};
    }
}
\tikzset{
    lr_pre/.pic={
        \draw[fill=black!10] (0,0) rectangle (3.6, 3.6);
        \pic at (0.1, 0.1) {lr_U};
        \pic at (0.8, 2.9) {lr_Vt};
    }
}
\tikzset{
    lr/.pic={
        \pic at (0.0, 0.75) {lr_pre};
        \node[align=center, anchor=base] at (1.8, 0.35) {$\mA=\mU\mV^T$};
    }
}

\tikzset{
    B_ij/.pic={
        \pic at (0,0) {U_block_single};
        \pic at (0.15,1) {Vt_block_single};
    }
}
\tikzset{
    BLR_row/.pic={
        \pic at (0,0) {B_ij};
        \pic at (1.2,0) {B_ij};
        \pic at (2.4,0) {B_ij};
    }
}
\tikzset{
    BLR_pre/.pic={
        \draw[fill=black!10] (0,0) rectangle (3.6, 3.6);
        \pic at (0,0) {BLR_row};
        \pic at (0,1.2) {BLR_row};
        \pic at (0,2.4) {BLR_row};
        \node[left] at (0.1,1.75) {$\mU_{i,j}$};
        \node[above] at (1.85,3.5) {$\mV_{i,j}^T$};
    }
}
\tikzset{
    BLR/.pic={
        \pic at (0.0, 0.75) {BLR_pre};
        \node [align=center, anchor=base] at (1.8, 0.25) {$\mA_{i,j}=\mU_{i,j}\mV_{i,j}^T$};
    }
}

\tikzset{
    D_block_single/.pic={
        \draw[fill=black!40] (0.0, 0.0) rectangle (0.2, 1.2);
    }
}
\tikzset{
    D_block/.pic={
        \pic at (0,0) {D_block_single};
        \pic at (0.2,0) {D_block_single};
        \pic at (0.4,0) {D_block_single};
        \pic at (0.6,0) {D_block_single};
        \pic at (0.8,0) {D_block_single};
        \pic at (1.0,0) {D_block_single};
    }
}
\tikzset{
    block_diag_pre/.pic={
        \draw (0,0) rectangle (3.6, 3.6);
        \pic at (2.4,0) {D_block};
        \pic at (1.2,1.2) {D_block};
        \pic at (0.0,2.4) {D_block};
        \node[align=center] at (1.8,1.8) {\color{white} $\mD_{i,i}$};
    }
}
\tikzset{
    block_diag/.pic={
        \pic at (0.0, 0.75) {block_diag_pre};
        \node [align=center, anchor=base] at (1.8, 0.35) {$\mA_{i,j}=\mD_{i,i}$ if $i=j$, $\mO$ o.w.};
    }
}

\tikzset{
    gaudi_gblr_pre/.pic={
        \draw (0,0) rectangle (3.6, 3.6);
        \draw[fill=black!10] (0.3, 0.2) rectangle (1.7, 2.2);
        \draw[fill=lightcolor] (0.35, 0.25) rectangle (0.5, 2.0);
        \draw[fill=darkcolor] (0.5, 2.0) rectangle (1.65, 2.15);

        \draw[fill=black!10] (1.3, 1.8) rectangle (3.5, 3.2);
        \draw[fill=lightcolor] (1.35, 1.85) rectangle (1.5, 3.0);
        \draw[fill=darkcolor] (1.5, 3.0) rectangle (3.45, 3.15);
        
        \draw[fill=black!10] (0.2, 1.8) rectangle (1.0, 3.2);
        \draw[fill=lightcolor] (0.25, 1.85) rectangle (0.4, 3.0);
        \draw[fill=darkcolor] (0.4, 3.0) rectangle (0.95, 3.15);
    }
}
\tikzset{
    gaudi_gblr/.pic={
        \pic at (0.0, 0.75) {gaudi_gblr_pre};
        \node [align=center, anchor=base] at (1.8, 0.35) {$\mA=\sum_{k=1}^{r'}\vu_k\vv_k^T$};
    }
}

\pic at (0.0, 1.0) {BLR};
\node [align=center, anchor=base] at (1.8, 0.5) {\Large  Monarch \citep{dao2022monarch}};
\pic at (4.0, 1.0) {lr};
\node [align=center, anchor=base] at (5.8, 0.5) {\Large Low-Rank};
\pic at (8.0, 1.0) {block_diag};
\node [align=center, anchor=base] at (9.8, 0.5) {\Large Block-Diagonal};
\pic at (12.0, 1.0) {gaudi_gblr};
\node [align=center, anchor=base] at (13.8, 0.5) {\Large GBLR \citep{lee2024differentiable}};
\draw (16.0,1.25) -- (16.0, 5.75);
\pic at (16.4, 1.0) {blast};
\node [align=center, anchor=base] at (18.9, 0.5) {\Large \textbf{BLAST (Proposed)}};

\end{tikzpicture}

    }
    \caption{Existing structured matrices and our proposed BLAST matrix. The unique structure of BLAST allows for flexible matrix structures while enabling faster matrix multiplication compared to existing matrices.}
    \label{fig:intro-to-blast}
\end{figure}

\section{Block-Level Adaptive Structured (BLAST) Matrix}\label{sec:blast}
Consider a square matrix\footnote{For an $m\times n$ rectangular matrix, we partition $m$ rows into $b$ chunks assuming that $b$ divides $m$ as well.} $\mA\in\R^{n\times n}$ for some $n \in \mathbb{N}$, which has an unknown intrinsic low-dimensional structure. 
We first equally partition the matrix $\mA$ into $b\times b$ blocks of size $p\times p$ where $b,p \in \mathbb{N}$ are constants such that $n=bp$:
\begin{align}\label{eq:block-matrix}
    \mA = 
    \begin{bmatrix}
    \mA_{1,1} & \mA_{1,2} & \cdots & \mA_{1,b} \\
    \mA_{2,1} & \mA_{2,2} & \cdots & \mA_{2,b} \\
    \vdots & \vdots & \ddots &  \vdots \\
    \mA_{b,1} & \mA_{b,2} &\cdots & \mA_{b,b}
    \end{bmatrix}, \quad \mA_{i,j}\in\R^{p\times p}, \quad i,j \in [b].
\end{align}
Then, the BLAST matrix parameterizes each block matrix $\mA_{i,j}$ using three factors:
\begin{align}\label{eq:blast-block}
\mA_{i,j} &= \mU_{i} \mS_{i,j} \mV_j^T,
\end{align}
where $\mU_{i}, \mV_{j}\in\R^{p\times r}$ are the left and the right factors, respectively, and $\mS_{i,j}=\diag(\vs_{i,j})$ is an $r\times r$ diagonal matrix whose diagonal entries are $\vs_{i,j}\in \R^r$. We provide a visual representation on the rightmost side of Figure~\ref{fig:intro-to-blast}, and illustrate how this structure differs from other types of matrices. While the BLAST structure may appear similar to SVD, there are two notable differences: (i) the left and right factors do \textit{not} need to be orthonormal, and (ii) the diagonal entries do \textit{not} need to be positive. These distinctions make it more flexible in capturing different types of low-rank structures.

As illustrated in Figure~\ref{fig:intro-to-blast}, the BLAST matrix also comes with two unique properties:
\begin{itemize}[leftmargin=*]
    \item \textbf{Factor Sharing:}
    The left factor matrix $\mU_i$ of size $rp$ is \textit{shared} across $b$ blocks at the $i^{\text{th}}$ row, i.e., $\mA_{i,1},\ldots,\mA_{i,b}$.
Likewise, the right factor $\mV_j$ is shared across the blocks at the $j^{\text{th}}$ column. 
On the other hand, the diagonal factor $\vs_{i,j}$ of size $r$ is specific to each block $\mA_{i,j}$.
Hence the total number of parameters of an $n\times n$ BLAST matrix with $b\times b$ number of  blocks of rank $r$ is $2rpb+rb^2=2nr+rb^2$.
This reduces the number of parameters $b$ times by enforcing the blocks at the same row or column share the same bases.
\item \textbf{Individual Diagonal Factors: }
The individual diagonal factors of each block matrix are the source of the adaptivity and flexibility of the BLAST matrix. By changing the values of the diagonal factors, the BLAST matrix can encompass a wide variety of matrix structures.
These factors can be estimated using \textit{gradient descent}, since $\vs_{i,j}$ is a real-valued vector and $\mA_{i,j}=\mU_{i}\diag(\vs_{i,j})\mV_{j}^T$ is linear to $\vs_{i,j}$. 
\end{itemize}

\paragraph{Low-Rank Matrices as Special Cases of BLAST }%

To demonstrate how the BLAST matrix can capture different types of structures, we present an example showing how the BLAST matrix can encompass a low-rank matrix. Consider the case where all the diagonal factors are ones, i.e., $\vs_{i,j}=\boldsymbol{1}_r$ for all $i,j=1,2,\ldots,b$. Then, we can write the block matrix as follows:
\begin{align*}
    \mU\mV^T=\begin{bmatrix}
        \mU_1 \\
        \mU_2 \\
        \vdots \\
        \mU_b
    \end{bmatrix} \begin{bmatrix}
        \mV_1 & \mV_2 & \cdots \mV_b
    \end{bmatrix} 
    = 
    \begin{bmatrix}
        \mU_1\mV_1^T & \mU_1 \mV_2^T & \cdots & \mU_1 \mV_b^T \\
        \mU_2\mV_1^T & \mU_2 \mV_2^T & \cdots & \mU_2 \mV_b^T \\
        \vdots & \vdots & \ddots & \vdots \\
        \mU_b\mV_1^T & \mU_b \mV_2^T & \cdots & \mU_b\mV_b^T
    \end{bmatrix}.
\end{align*}
Hence, if the true underlying structure is low-rank, we can expect the BLAST matrix to learn this specific structure. Similarly, we show in Section \ref{sec:more-special-cases} that the BLAST matrix can construct \textit{low-rank}, \textit{block-diagonal}, and \textit{block low-rank} matrices through different diagonal parameters. A combination of these canonical structured matrices, such as a \textit{low-rank with block-diagonal} matrix, can also be achieved by simply concatenating the factors of each matrix.

\begin{wrapfigure}[11]{R}{0.55\linewidth}
    \centering
    \vspace{-2.5em}
    \begin{minipage}{\linewidth}
        \begin{algorithm}[H]
            \caption{BLAST Matrix-Vector Product}\label{algo:matmul}
            \begin{algorithmic}[1]
                \Require{$\mU,\mV,\vs,\vx$}
                \State $[\vx_1^T, \vx_2^T, \cdots, \vx_b^T]^T\gets \vx$
                \For{$j=1,2,\ldots,b$}\Comment{\#Parallel}
                    \State $\vz_j \gets \mV_j^T \vx_j$ 
                \EndFor
                \For{$i=1,2,\ldots,b$}\Comment{\#Parallel}
                    \State $\vy_i \gets \mU_i \sum_{j=1}^b\vs_{i,j}\odot \vz_j$ 
                \EndFor \\
                \Return{$\vy\gets[\vy_1^T,\ldots,\vy_b^T]^T$}
            \end{algorithmic}
        \end{algorithm}
    \end{minipage}
\end{wrapfigure}
\paragraph{Matrix Multiplication }
DNNs involve numerous matrix-vector (matrix-matrix) multiplications in the form of $\vy=\mA\vx$ ($\mY=\mA\mX$).
Algorithm \ref{algo:matmul} depicts the BLAST matrix-vector multiplication procedure.
Consider the partitioned input vector $\vx=[\vx_1^T, \vx_2^T, \cdots, \vx_b^T]^T$ and the partitioned output vector $\vy=[\vy_1^T, \vy_2^T, \cdots, \vy_b^T]^T$.
The $i^{\text{th}}$ partitioned output vector $\vy_i$ is then computed by the sum of the $b$ block-wise matrix-vector multiplications along $j=1,\ldots,b$:
\begin{align}
    \label{eq:blast_mat_vec}
    \vy_{i} &= \sum_{j=1}^{b}\mA_{i,j}\vx_j = \sum_{j=1}^b \mU_i\mS_{i,j}\mV_j^T\vx_j  = \mU_i \left(\sum_{j=1}^{b} \mS_{i,j} \left(\mV_j^T\vx_j\right)\right),  \quad i=1,\ldots,b.
\end{align}
The number of multiplications required to perform the matrix-vector multiplication $\vy=\mA\vx$ is $(2n+b^2)r$.
The matrix multiplication $\vz_j=\mV_j^T\vx_j, \: j=1,\ldots,b$ is computed once and shared across $i=1,\ldots,b$, whereas the matrix multiplications in Line 3 and Line 6 of Algorithm \ref{algo:matmul} can be executed in parallel, e.g., by $\texttt{torch.bmm}$ in PyTorch \citep{paszke2019pytorch}.
An implementation of \Cref{algo:matmul} for general matrix or tensor inputs can be found in \Cref{sec:blast-details}.

\section{Applications of BLAST Matrices}\label{sec:blast-learning}

There are two main applications of BLAST matrices: (i) \textit{training from scratch} with the BLAST structure and (ii) \textit{compression of pre-trained weights} using BLAST factorization.

\subsection{Training from Scratch using BLAST Matrices}

To train a DNN on a dataset, parameters are typically initialized randomly and updated through stochastic gradient descent. In this setting, BLAST can replace dense weights to learn structures from the training data.
Instead of using random dense weight matrices, the model is initialized with random BLAST factors $\mU_i,\mV_j,\vs_{i,j}$.
Since the forward and the backward path of the linear layer involving the weight matrix is composed of three linear operations as in \Cref{eq:blast_mat_vec}, the derivatives of the minibatch loss can be back-propagated by automatic differentiation frameworks \citep{paszke2019pytorch}.
Hence, all of the trainable parameters of BLAST can be updated using conventional optimizers (e.g., Adam \citep{kingma2014adam} or AdamW \citep{loshchilov2018decoupled}) without additional treatment. 

\subsection{Compressing Weights via BLAST Factorization}

\paragraph{BLAST Factorization via Gradient Descent}
Given pre-trained dense weights of a DNN, we can compress the weights using BLAST matrices. Let $\mA$ denote the weight matrix and $\mA_{i,j}$ denote its blocks.
We estimate the BLAST factors of $\mA_{i,j}$ by finding the factors of the BLAST matrix that minimize the Frobenius norm error between the original weight matrix and the BLAST structure:
\begin{align}\label{eq:blast-facto-loss}
    \ell(\mU_{*},\mV_{*},\vs_{*,*}) = \sum_{i=1}^b\sum_{j=1}^b\frac{1}{2} \left\|\mA_{i,j}-\mU_{i}\diag(\vs_{i,j})\mV_j^T\right\|_F^2 ,
\end{align}
where $*$ denotes the collection of all $b$ components along the axis.
This problem shares many characteristics with the classical matrix factorization problem~\citep{tong2021accelerating,zhang2021preconditioned,xu2023power,ye2021global}, and hence we can solve for the factors using alternating gradient descent starting from small random initialization (e.g., Line 1 of \Cref{algo:blast-facto-precgd}) \citep{stoger2021small,kwon2024efficient}.
That is, the $k^{\text{th}}$ gradient descent step is composed of three alternating updates with a step size $\eta>0$:
\begin{align}
            \mU_i^{(k+1)}&\gets \mU_i^{(k)} - \eta_{\mU_i^{(k)}} \cdot \nabla_{\mU_i^{(k)}} \ell\left(\mU_*^{(k)}, \mV_*^{(k)}, \vs_{*,*}^{(k)}\right),  \label{eq:blast-gd-u}\\
            \mV_j^{(k+1)}&\gets \mV_j^{(k)} - \eta_{\mV_j^{(k)}} \cdot \nabla_{\mV_j^{(k)}}\ell\left(\mU_*^{(k+1)}, \mV_*^{(k)}, \vs_{*,*}^{(k)}\right), \label{eq:blast-gd-v}\\
            \vs_{i,j}^{(k+1)}&\gets \vs_{i,j}^{(k)} - \eta_{\vs_{i,j}^{(k)}} \cdot \nabla_{\vs_{i,j}^{(k)}}\ell\left(\mU_*^{(k+1)}, \mV_*^{(k+1)}, \vs_{*,*}^{(k)}\right).  \label{eq:blast-gd-s}
\end{align}

With properly chosen step sizes, \Cref{eq:blast-gd-u,eq:blast-gd-v,eq:blast-gd-s} always decrease the loss value whenever the current variables do not have any infinite entries and the gradient is non-zero.
Using notations $\bar{\mV}_i^{(k)}=\left[\mS_{i,1}^{(k)}\mV_{1}^{(k)T} \cdots \mS_{i,b}^{(k)}\mV_{b}^{(k)T}\right]^T$ and $\bar{\mU}_j^{(k)}=\left[(\mU_{1}^{(k+1)}\mS_{1,j}^{(k)})^{T} \cdots (\mU_{b}^{(k+1)}\mS_{b,j}^{(k)})^{T}\right]^T$ to indicate the concatenation of the right and left factors scaled by the diagonal components, the loss is monotonically non-increasing as in the following theorem.
\begin{theorem}\label{thm:gd-decreasing-loss}
    Let $\mA_{i,j}\in\R^{p\times p}$ be a target block and $\mU_i^{(k)},\mV_j^{(k)}\in\R^{p\times r}$, and $\vs_{i,j}^{(k)}\in\R^{r}$ be factors of a block in the BLAST matrix to be optimized.
    With the step sizes $0<\eta_{\mU_i^{(k)}}\le 1/\sigma_1\left(\bar{\mV}_i^{(k)T}\bar{\mV}_i^{(k)}\right),$ 
    $0<\eta_{\mV_j^{(k)}}\le1/\sigma_1\left(\bar{\mU}_j^{(k)T}\bar{\mU}_j^{(k)}\right),$ 
    $ 0<\eta_{\vs_{i,j}^{(k)}}\le 1/\sigma_1\left((\mU_i^{(k+1)T}\mU_i^{(k+1)}) \odot (\mV_j^{(k+1)T}\mV_j^{(k+1)})\right)$, the gradient descent updates in \Cref{eq:blast-gd-u,eq:blast-gd-v,eq:blast-gd-s} monotonically non-increase the loss:
    \begin{align*}
        \ell(\mU_{*}^{(k+1)}, \mV_{*}^{(k+1)}, \vs_{*,*}^{(k+1)}) &\le \ell(\mU_{*}^{(k)}, \mV_{*}^{(k)}, \vs_{*,*}^{(k)}). 
    \end{align*}
\end{theorem}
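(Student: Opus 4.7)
The plan is to exploit the fact that when all but one factor is held fixed, the loss $\ell$ defined in \Cref{eq:blast-facto-loss} becomes a convex quadratic in the remaining variable, so the standard descent lemma for $L$-smooth functions (namely $f(x-\eta\nabla f(x))\le f(x)$ whenever $\eta\le 1/L$) yields a one-step monotone decrease. Because the three updates in \Cref{eq:blast-gd-u,eq:blast-gd-v,eq:blast-gd-s} are all gradient steps on the same $\ell$, chaining the three one-step decreases proves the theorem. The work reduces to identifying, for each partial problem, the correct Lipschitz constant of the gradient (equivalently, the top eigenvalue of the partial Hessian) and verifying that it matches the quantity appearing in the prescribed step-size bound.

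For the $\mU_i$ update I would fix $\mV_*^{(k)}$ and $\vs_{*,*}^{(k)}$, observe that only the $i$-th block row of $\ell$ depends on $\mU_i$, and rewrite the sum of those $b$ squared-norm terms as a single Frobenius-norm residual,
\[
    \tfrac{1}{2}\bigl\|[\mA_{i,1}\ \cdots\ \mA_{i,b}] - \mU_i \bar{\mV}_i^{(k)T}\bigr\|_F^2,
\]
using the definition of $\bar{\mV}_i^{(k)}$ from the theorem statement. This is quadratic in $\mU_i$ with Hessian acting as $\mU_i\mapsto \mU_i(\bar{\mV}_i^{(k)T}\bar{\mV}_i^{(k)})$, so its gradient is $\sigma_1(\bar{\mV}_i^{(k)T}\bar{\mV}_i^{(k)})$-Lipschitz in the Frobenius norm and the descent lemma applies at the prescribed step size. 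Moreover, $\ell$ splits as a sum over $i$ whose summands involve disjoint row factors, so the $b$ updates are decoupled and applying them in parallel still monotonically decreases $\ell$.

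The $\mV_j$ update is symmetric: the same argument applied to the transposed residual and the column-block targets yields a second monotone decrease with the stated step size, using the freshly updated $\mU_*^{(k+1)}$. The $\vs_{i,j}$ update is the place where the proof is least mechanical, and I expect it to be the main obstacle: the partial loss is $\tfrac{1}{2}\|\mA_{i,j} - \mU_i\diag(\vs_{i,j})\mV_j^T\|_F^2$, and to extract its Lipschitz constant I would vectorize through the Khatri--Rao identity $\vect(\mU_i\diag(\vs_{i,j})\mV_j^T) = (\mV_j\odot_{\mathrm{KR}}\mU_i)\,\vs_{i,j}$ and then invoke the well-known relation $(\mV_j\odot_{\mathrm{KR}}\mU_i)^T(\mV_j\odot_{\mathrm{KR}}\mU_i) = (\mU_i^T\mU_i)\odot(\mV_j^T\mV_j)$ connecting the Khatri--Rao product to the Hadamard product. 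This identifies the Hessian as $(\mU_i^{(k+1)T}\mU_i^{(k+1)})\odot(\mV_j^{(k+1)T}\mV_j^{(k+1)})$, matching the step-size bound in the theorem, so the descent lemma closes the argument. Chaining the three one-step inequalities then yields $\ell(\mU_*^{(k+1)},\mV_*^{(k+1)},\vs_{*,*}^{(k+1)}) \le \ell(\mU_*^{(k)},\mV_*^{(k)},\vs_{*,*}^{(k)})$.
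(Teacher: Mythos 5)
Your proposal is correct and follows essentially the same route as the paper: block-wise convexity, the descent lemma with the Lipschitz constant of each partial gradient identified as the top singular value of the corresponding Gram (or Hadamard-of-Grams) matrix, and chaining the three resulting one-step inequalities. The only cosmetic difference is that you obtain the Hessian of the $\vs_{i,j}$ subproblem via the Khatri--Rao vectorization identity, whereas the paper derives the gradient directly through trace manipulations and the identity $\diag(\mX\mY^T)=(\mX\odot\mY)\boldsymbol{1}$; both yield the same matrix $(\mU_i^T\mU_i)\odot(\mV_j^T\mV_j)$.
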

The proof of Theorem \ref{thm:gd-decreasing-loss} is in Section \ref{sec:proofs}, which is an application of the descent lemma from classical optimization theory. 

\begin{figure}[t]
    \centering
    \includegraphics[width=0.65\linewidth]{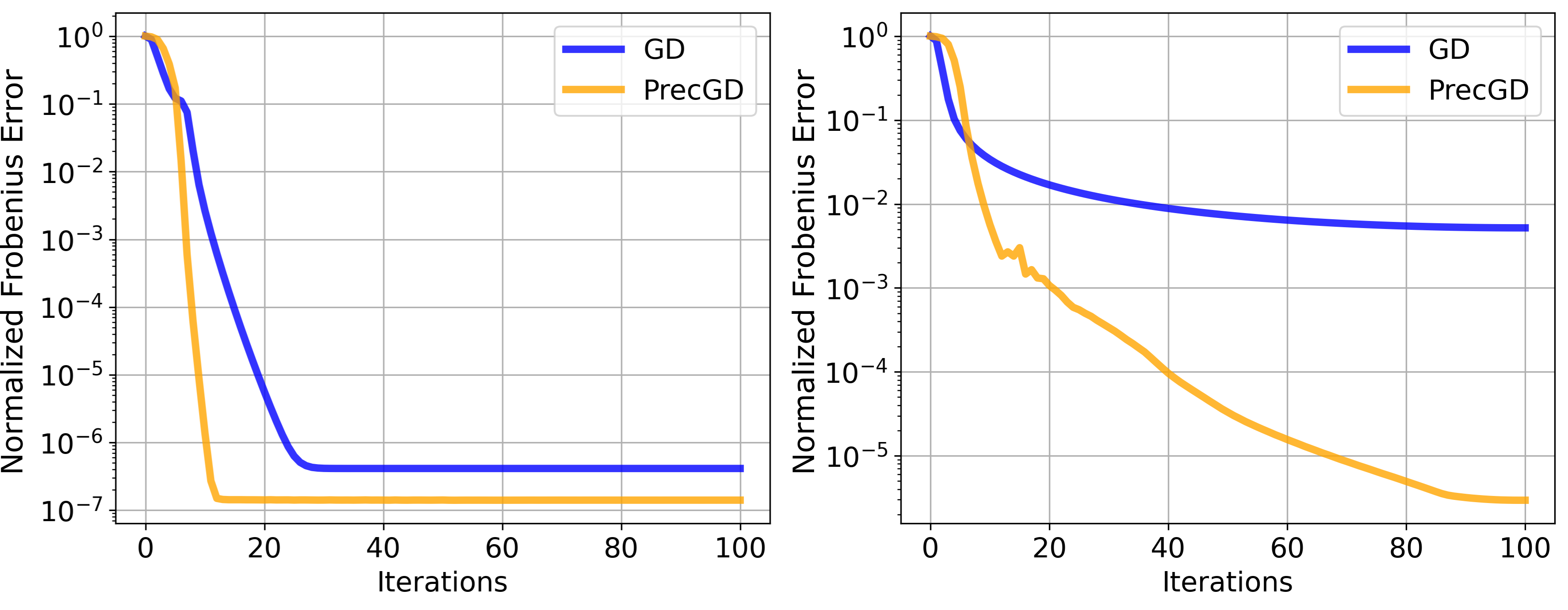}
    \caption{Convergence of the BLAST factorization with and without the preconditioning steps on noiseless low-rank matrix factorization with rank $r^\star$. Left: The BLAST parameter $r = r^{\star}$, Right: $r > r^{\star}$. When $r>r^*$, the convergence rate of GD without the preconditioning is slowed down, while GD with the preconditioning (PrecGD) can recover the ground truth with small error.}
    \label{fig:convergence-gd-vs-precgd}
\end{figure}

\paragraph{Blast Factorization via Preconditioned Gradient Descent}

Recall that in order to estimate the BLAST factors given a pre-trained weight $\mA$, we need to choose a rank $r$. Since we do not know the rank of $\mA$ a priori, we may have to overestimate the rank. However, overestimating the rank may slow down the convergence rate for solving Equation~(\ref{eq:blast-facto-loss}). To illustrate this, we performed an empirical analysis of the convergence behavior on a synthetically generated target low-rank matrix $\mA$, whose dimension is $256 \times 256$ with a true rank of $r^*=8$. For the analysis, we computed the factors of a BLAST matrix with $b=16$ for various values of $r$. We used linearly decaying step sizes $\eta^{(k)}=\eta_{\mU_{i}^{(k)}}=\eta_{\mV_{j}^{(k)}}=\eta_{\vs_{i,j}^{(k)}}$. When $r=r^*=8$ (the ranks of the left and right factors $\mU_{*}, \mV_*$ match the actual rank of $\mA$), gradient descent finds a low-rank solution with minimal error within 30 iterations, as shown by the blue curve in Figure \ref{fig:convergence-gd-vs-precgd}-left. However, in the case of $r=32 > r^*$ where the BLAST factors are overparameterized, we observed slower convergence and a substantial residual error after 100 steps as shown by the blue curve in Figure \ref{fig:convergence-gd-vs-precgd}-right. This behavior is consistent with previous observations of slow convergence in ill-conditioned matrix factorization problems \citep{tong2021accelerating,zhang2021preconditioned}.

The convergence rate of solving the overparameterized low-rank factorization by gradient descent can be improved via inexpensive \textit{preconditioners}~\citep{tong2021accelerating,zhang2021preconditioned} which effectively decrease the condition number at each iteration. 
Inspired by the preconditioned gradient descent for low-rank factorization, we generalize the idea to solve our problem
by multiplying preconditioning matrices to the gradients in \Cref{eq:blast-gd-u,eq:blast-gd-v,eq:blast-gd-s}. %
We summarize the preconditioned gradient descent method for the BLAST factorization in Algorithm \ref{algo:blast-facto-precgd}, 
\begin{algorithm}[t]
    \caption{Preconditioned BLAST Factorization (see \Cref{eq:precond-mats-1,eq:precond-mats-2})} 
    \label{algo:blast-facto-precgd}
    \begin{algorithmic}[1]
        \Require{$\mA$, $\{\eta^{(k)}\}_{k=0}^{K-1}$, $\epsilon, b, K, \delta>0$}
        \State $\mU_i^{(0)}, \mV_j^{(0)}\sim \gN(\vzero, \epsilon^2\mI),\: \vs_{i,j}^{(0)}\sim\mathrm{Unif}(0,1), \hfill \forall i,j=1,\ldots,b$
        \For{$k=0,1,\ldots,K-1$}
            \State $\mU_i^{(k+1)}\gets \mU_i^{(k)} - \eta^{(k)} \cdot \left(\mU_i^{(k)}\bar{\mV}_i^{(k)T}-\mA_{i,*}\right)\bar{\mV}_i^{(k)} \mP_{\mU_i}^{(k)},$ \hfill $\forall i=1,\ldots,b$ 
            \State $\mV_j^{(k+1)}\gets \mV_j^{(k)} - \eta^{(k)} \cdot \left(\bar{\mU}_j^{(k)}\mV_j^{(k)T}-\mA_{*,j}\right)^T\bar{\mU}_j^{(k)} \mP_{\mV_j}^{(k)},$ \hfill $\forall j=1,\ldots,b$ 
            \State $\vs_{i,j}^{(k+1)}\gets \vs_{i,j}^{(k)} - \eta^{(k)} \cdot  \mP_{\vs_{i,j}}^{(k)} \left(\mW_{i,j}^{(k)}\vs_{i,j} - \diag\left(\mU_i^{(k+1)T}\mA_{i,j}\mV_j^{(k+1)}\right)\right), $ \hfill $ \forall i,j=1,\ldots,b$ 
        \EndFor
        \State \Return{$\mU_{*}^{(K)}, \mV_{*}^{(K)}, \vs_{*,*}^{(K)}$}
    \end{algorithmic}
\end{algorithm}
where the following preconditioning matrices are used:
\begin{align}
    \mP_{\mU_i}^{(k)} &= \left(\bar{\mV}_i^{(k)T}\bar{\mV}_i^{(k)}+\delta\mI\right)^{-1},  
    \mP_{\mV_j}^{(k)} = \left(\bar{\mU}_j^{(k)T}\bar{\mU}_j^{(k)}+\delta\mI\right)^{-1},  \label{eq:precond-mats-1}\\
    \mP_{\vs_{i,j}}^{(k)} &= \left(\mW_{i,j}^{(k)} + \delta \mI\right)^{-1}, \: \mW_{i,j}^{(k)}=\left(\mU_i^{(k+1)T}\mU_i^{(k+1)}\right) \odot \left(\mV_j^{(k+1)T}\mV_j^{(k+1)}\right).\label{eq:precond-mats-2}
\end{align}
$\delta$ is proportional to the square root of the error in \Cref{eq:blast-facto-loss}. %
The derivations are presented in \Cref{sec:precond-matrices}.
\Cref{fig:convergence-gd-vs-precgd} shows that preconditioning improves the convergence of the overparameterized BLAST factorization. The preconditioned gradient descent (yellow curve) finds the points with low error after 100 steps, whereas the gradient descent without preconditioning fails to achieve a small error.
More empirical studies on the BLAST factorization with or without preconditioning can be found in \Cref{sec:blast-emp-study}. We summarize the compression procedure in \Cref{algo:blast-facto-precgd}. The computational complexity of this compression algorithm is $O(nr^2+r^3)$ where the cubic dependency on $r$ is from the matrix inversion steps. We emphasize that these matrix inversion steps are not substantial computational bottlenecks because $r$ is smaller than $n$ as in prior work \citep{tong2021accelerating,zhang2021preconditioned}.

After BLAST compression outlined in \Cref{algo:blast-facto-precgd}, the BLAST factors can be used directly for inference to save both storage and computational costs. However, we observe that, instead of using the estimated factors directly, using them as initial points and refining the estimates by re-training the model with BLAST factors can further improve the performance of the compressed model. We refer to this process as ``re-training'' after BLAST compression.

\section{Experimental Results}\label{sec:experimental-results}
We evaluate the BLAST matrix under two settings: (i) training from scratch with random initialization in the BLAST format, and (ii) re-training after compressing the dense weights to BLAST matrices via \Cref{algo:blast-facto-precgd}. %
We compare the performance of BLAST with both non-adaptive and adaptive structured matrices. Among the non-adaptive approaches, we include low-rank (LR) matrices, Monarch for block low-rank (BLR) \citep{dao2022monarch}, and Pixelfly \citep{chen2022pixelated} or Block-Diagonal for block sparse matrices. For the adaptive and learnable structured matrix category, we evaluate Gaudi-GBLR \citep{lee2024differentiable}.
We report the number of floating point operations (FLOPs) by counting the number of multiplications.
The BLAST matrix with $b\times b$ number of blocks is denoted by BLAST$_b$.
We used the same hyperparameter $r$ for every target weight matrix by setting it to meet the computational budget of the DNN.
All experimental details can be found in \Cref{sec:experimental-details}.

\begin{figure}[t]
    \centering
    \begin{minipage}{0.49\linewidth}
        \centering
        \includegraphics[width=\linewidth]{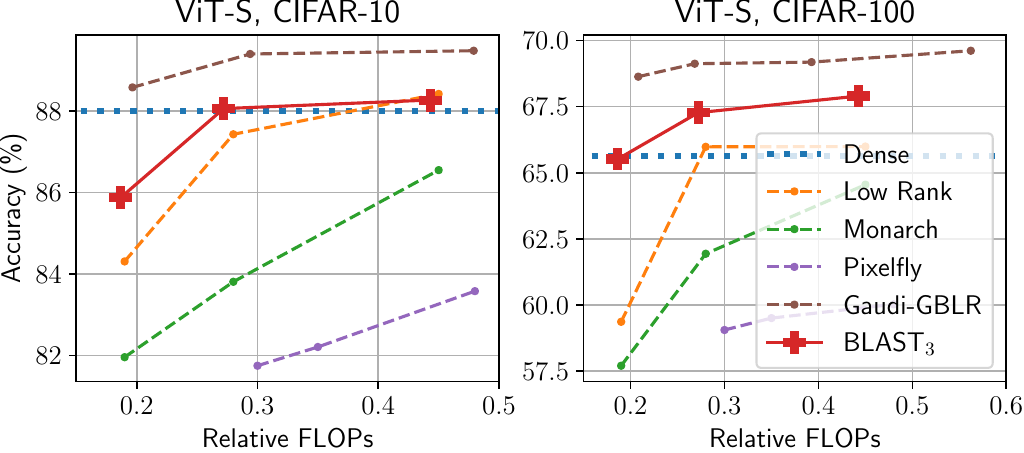}
        \caption{CIFAR-10/100 accuracy of ViT-S trained from scratch with different structured matrices.}
        \label{fig:cifar-vit-s}       
    \end{minipage}
    \hfill
    \begin{minipage}{0.49\linewidth}

 \begin{table}[H]
    \centering

    \resizebox{\linewidth}{!}{
    \begin{tabular}{l|r|r}
    \hline
    Model       & Accuracy (\%)     & Relative FLOPs (\%) \\ \hline
    Dense ViT-Base & 78.7                         & 100 \\ \hline
    Low-Rank    & 78.9                         & 33.5 \\
    Monarch \citep{dao2022monarch} & 78.9       & 33.5 \\
    Gaudi-GBLR \citep{lee2024differentiable}  &  78.5                        &  32.8 \\
    BLAST$_3$       & \textbf{79.3}                              & \textbf{27.8} \\
    \hline
    \end{tabular}}
    \caption{ImageNet validation accuracy and relative FLOPs of ViT-Base trained from scratch models with different structured weight matrices. The image and the patch sizes are $224\times 224$ and $16\times 16$, respectively. BLAST$_3$ indicates the BLAST matrix with $3\times 3$ number of blocks.}
    \label{tab:vit-imagenet-acc}
\end{table}

    \end{minipage}
\end{figure}

\begin{wrapfigure}[15]{R}{0.42\linewidth}
    \centering
    \vspace{-2em}
    \includegraphics[width=\linewidth]{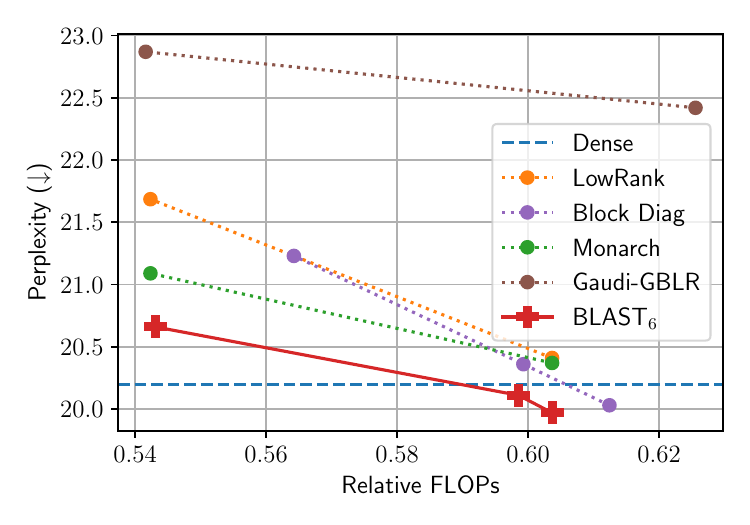}
    \caption{Pre-training result: WikiText 103 test perplexity-FLOPs trade-off curves from GPT-2 with different types of weight matrices.}
    \label{fig:gpt-2-wt103}
\end{wrapfigure}
\subsection{Training from Scratch}
\paragraph{Image Classification}
We train the reduced-size Vision Transformers (ViT) \citep{dosovitskiy2020image} with BLAST$_3$ (BLAST with $b=3$) weight matrices on CIFAR-10, 100 \citep{Krizhevsky09learningmultiple}, and ImageNet-1k\citep{imagenet15russakovsky} for 310 epochs from \textit{random initialization}, and compare with other structured matrices.
In the CIFAR-10 and CIFAR-100 benchmarks, BLAST outperforms several non-adaptive baselines, such as Low-Rank, Pixelfly, and Monarch with higher accuracy at the same FLOPs complexity (\Cref{fig:cifar-vit-s}). Gaudi-GBLR presents the most favorable accuracy-to-FLOPs tradeoff due to its capability of learning the adaptive resource/budget allocation for each weight matrix, which is a feature that our BLAST setting lacks in this particular evaluation (as we force it to use the same $r$ for all matrices).

However, in the context of ImageNet-1k in \Cref{tab:vit-imagenet-acc}, weight matrices trained using BLAST with $b=3$ attain the highest levels of accuracy with the least FLOPs. This superior performance of BLAST (despite the common $r$ for all matrices) over Gaudi-GBLR can be attributed to its simpler training process with fewer hyperparameters. In contrast, the more complex training requirements of Gaudi-GBLR, which involve smoothness annealing and proximal gradient descent, may lead to suboptimal results for a large model such as ViT-Base in \Cref{tab:vit-imagenet-acc}.

\paragraph{Language Model Evaluation}
We validate the training performance of BLAST weights on language models. We replace the weights of GPT-2 \citep{radford2019language} with random BLAST$_6$ matrices and trained the network from scratch on the WikiText 103 \citep{merity2016pointer} dataset for 100 epochs. 
In \Cref{fig:gpt-2-wt103}, we compare the test set perplexity of BLAST with the perplexity from low-rank, block-diagonal, Monarch, and Gaudi-GBLR matrices.
Similar to the ImageNet training, we found that BLAST achieves the best perplexity-FLOPs trade-off. Compared to Gaudi-GBLR, BLAST obtains a significant perplexity gain. We attribute this improvement to the simple training process of BLAST which requires less hyperparameter tuning than that of Gaudi-GBLR.

\subsection{Compression and Re-training}\label{sec:compression-experiment}

In this section, we discuss the performance of BLAST weights when pre-trained dense weights are available. We first compress the dense weights using \Cref{algo:blast-facto-precgd} and re-train the model on the training data with the cross-entropy loss.

\begin{wrapfigure}[24]{R}{0.4\linewidth}
\vspace{-1.25em}
    \begin{minipage}{\linewidth}
        \centering    
        \includegraphics[width=\linewidth]{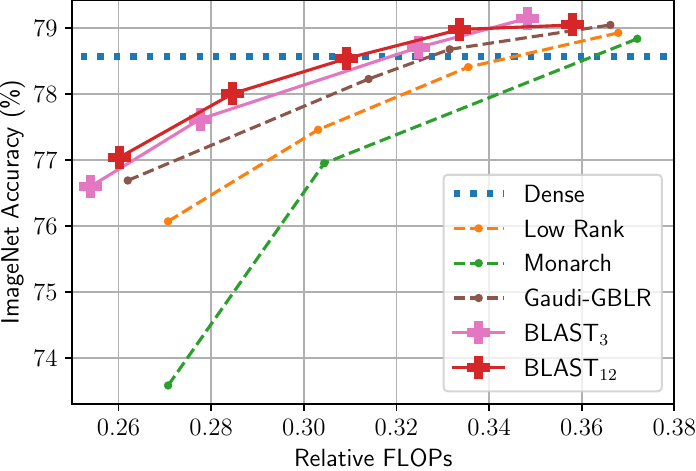}
       \caption{Compression and re-training result: ImageNet accuracy-FLOPs trade-off curves from ViT-B with different types of weight matrices.}
        \label{fig:imagenet-finetune}
    \end{minipage} 
    
     \begin{minipage}{\linewidth}
    \begin{table}[H]
\centering
\resizebox{\linewidth}{!}{
\begin{tabular}{ll|rrr}
\hline
CR                    & Method   & FID$(\downarrow)$ & sFID$(\downarrow)$ & IS$(\uparrow)$ \\ \hline
0\%                   & Original & 9.62              & 6.85               & 121.50         \\ \hline
\multirow{2}{*}{50\%} & Low-Rank & 48.07             & 11.44              & 26.09          \\
                      & BLAST$_{9}$    & 10.45             & 6.72               & 111.05      \\ \hline
\end{tabular}}
\caption{Performance comparison for compressing the weight matrices of a diffusion model followed by re-training. FID and IS scores were computed with respect to a validation dataset. CR stands for Compression Ratio. }
\label{tab:dit-results}
\end{table}
    \end{minipage}   
\end{wrapfigure}
\paragraph{ViT on ImageNet Classification} We compress the weights of the vision transformer (ViT) trained on ImageNet training set by BLAST$_{3}$ and BLAST$_{12}$ using \Cref{algo:blast-facto-precgd} and re-train the models for 35 epochs. %
The accuracy-FLOPs trade-off curve of each model is presented in \Cref{fig:imagenet-finetune}.
Both BLAST compressed  \& re-trained models outperform other baselines, even though BLAST models did not use the adaptive budget allocation, unlike Gaudi-GBLR.
It is observed that the accuracy of the BLAST models slightly increases from $b=3$ to $b=12$.

\paragraph{Diffusion Models }
We compress the weights of a Diffusion Transformer (DiT)~\citep{peebles2023scalable} pre-trained on ImageNet using BLAST matrices and 
compare its performance to SVD-based low-rank approximation. For both techniques, we match the compression ratio such that both decrease the total number of model parameters by 50\%, and re-train each model for 10 epochs on the ImageNet training set.
We evaluate the models by generating a total of 50,000 images using the original, low-rank, and BLAST compressed models, and compute the FID~\citep{heusel2017fid}, sFID~\citep{nash2021generating}, and IS~\citep{salimans2016inceptionscore} metrics with respect to the ImageNet validation set. The objective is to observe if the compressed model can generate images as realistic as the original uncompressed model.

In \Cref{tab:dit-results}, we show quantitatively that the model compressed via BLAST significantly outperforms the model compressed via SVD. The low-rank compressed model often generates unrealistic images, leading to poor metrics such as the inception score.
\Cref{fig:dit_motivation} also contrasts how the BLAST matrices contribute to maintaining high perceptual quality as well as a close instance-wise resemblance with the uncompressed model outputs. Due to space limitations, we defer additional qualitative results and experimental setup to \Cref{sec:dit-additional-results}.

\paragraph{Large Language Models (LLMs) }
We compress the weights of Llama-7B \citep{touvron2023llama} with BLAST matrices using \Cref{algo:blast-facto-precgd} by 20\% and 50\%, and re-train the models for 400 steps on a subset of SlimPajama \citep{cerebras2023slimpajama} dataset using 0.49B tokens. 
The number of blocks $b$ in the BLAST matrices is fixed at 16, and we use $r=1024$ for the attention modules and $r=1488$ for the MLP modules to achieve a 50\% compression ratio. 
We test the WikiText-2 perplexity and the zero-shot task classification accuracy on common sense reasoning datasets including PIQA\citep{bisk2020piqa}, HellaSwag\citep{zellers2019hellaswag}, WinoGrande\citep{sakaguchi2021winogrande}, BoolQ\citep{clark2019boolq}, OpenBookQA\citep{mihaylov2018can}, ARC-easy and challenge \citep{clark2018think}.
We report the performance of Low-Rank, Monarch, and Block-Diagonal weight matrices after compression at the same rate and re-training. 
In \Cref{tab:llama-7b-new}, the first row presents the performance of the original Llama-7B model. 
On 50\% compression ratio in the last five rows, the Monarch and Block-Diagonal matrices fail to recover the acceptable performance. 
Compared to Low-Rank weights, BLAST weights achieve the lowest performance degradation in WikiText-2 perplexity and zero-shot classification accuracy. 
The accuracy of each common sense reasoning benchmark and extended results can be found in \Cref{sec:additional-exp-llm}.

We provide an analysis to quantify the performance impact of compression and re-training. We first quantify the weight compression performance at 20\% compression ratio in \Cref{tab:llama-7b-new}. Although the compression ratio is moderate, Low-Rank and Monarch compression without re-training suffer from significant performance loss, at most 4x higher perplexity and 25\% accuracy degradation. On the other hand, BLAST$_{16}$ compression without re-training maintains reasonable accuracy and perplexity. 
This shows that the flexible and adaptive structure of BLAST matrices captures more information than other types of structured matrices. 
Yet, BLAST compression also exhibits noticeable performance degradation on a more intensive compression ratio (see the yellow curve in \Cref{fig:llama7b-curve}).
We provide more compression-only results on Diffusion Models and LLMs in \Cref{sec:additional-exp}.

We find that the re-training stage is crucial for converting a pre-trained model into an efficient version without losing significant accuracy when the compression ratio is high. In \Cref{fig:llama7b-curve}, we show the average zero-shot classification accuracy of the compressed models with 50\% compression. The models with BLAST weights before re-training (yellow curve) exhibit substantial accuracy degradation at higher compression ratios. However, re-training (blue curve) effectively recovers performance using only 0.49B tokens and 400 steps.

\begin{table}[t]
\resizebox{\linewidth}{!}{
\begin{tabular}{ll|rc|r|r}
\hline
CR                    & Method                                               & \# Params &  Re-trained?       & WikiText-2 Perplexity ($\downarrow$) & Avg. 0-Shot Accuracy (\%) ($\uparrow$) \\ \hline
0\%                   & Original Llama-7B                                    & 6.74B      & N/A & 9.37                               & 66.07                            \\ \hline
\multirow{3}{*}{20\%} & Low-Rank                                       & 5.41B                & No                     & 23.67 (-14.30)        & 59.57 (-6.50)                   \\ 
    & Monarch \citep{dao2022monarch} (BLR) & 5.41B &   No & 47.18 (-37.81) & 48.91(-17.17) \\ \cline{2-6} 
    & BLAST$_{16}$ & 5.41B     & No    & 12.13 (-2.76) & 62.94 (-3.14) \\ \hline 
\multirow{4}{*}{50\%}  & Low-Rank                                             & 3.51B       & Yes               & 26.33 (-16.96)                      & 48.40 (-17.67)                    \\
                      & Monarch \citep{dao2022monarch} (BLR) & 3.50B               & Yes        & 7.53e5 (-7.53e5)                & 35.03 (-31.04)                    \\
                      & Block-Diagonal                                       & 3.50B        & Yes               & 5.21e6 (-5.21e6)              & 34.86 (-31.21)                    \\  \cline{2-6} 
                      & BLAST$_{16}$                                         & 3.56B             & Yes          & \textbf{14.21 (-4.84)}             & \textbf{56.22 (-9.84)}           \\ \hline
\end{tabular}}
\caption{Zero-shot performance of LLaMA-7B after compression and retraining. 
Avg. 0-Shot Accuracy stands for the average accuracy of the zero-shot classification task. CR denotes compression ratio. \textbf{Bold} indicates the best performance under the same compression ratio. BLAST$_{b}$ indicates the BLAST matrix with $b\times b$ number of blocks.}
\label{tab:llama-7b-new}
\end{table}

\begin{figure}
    \centering
    \vspace{-0.1in}
    \begin{minipage}{0.41\linewidth}
        \centering
         \includegraphics[width=\linewidth]{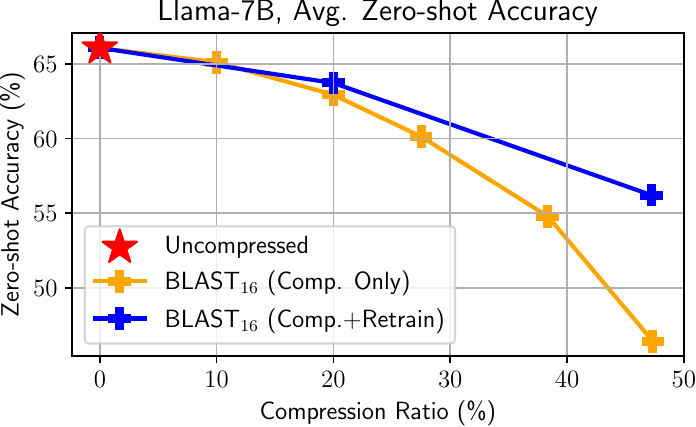}
        \caption{Average zero-shot accuracy vs. compression ratio curves of Llama-7B by BLAST$_{16}$ before and after re-training.}
        \label{fig:llama7b-curve}                
    \end{minipage}
    \hfill
    \begin{minipage}{0.5\linewidth}
  \begin{table}[H]
    \centering
    \resizebox{\linewidth}{!}{
    \begin{tabular}{cc|ccc}
    \hline
    CR  & $b$    & $L=10$    &$L=100$ & $L=1000$  \\ \hline
    0\%     & N/A & 0.41 $\pm$8e-5 & 3.82 $\pm$9e-4 & 41.23 $\pm$6e-3 \\ \hline
    20\%      & 2 & 0.35 $\pm$9e-5 & 3.30 $\pm$2e-3& 35.99 $\pm$4e-3 \\
    20\%      & 16 & 0.36 $\pm$8e-5 & 3.36 $\pm$2e-3 & 36.48 $\pm$7e-3 \\  \hline
    50\%      & 16 & 0.31 $\pm$4e-4 & 2.86 $\pm$1e-2 & 30.35 $\pm$2e-2 \\  \hline
    \end{tabular}}
    \caption{Average runtime (in second) of Llama-7B with BLAST$_b$ weights from 10 runs of text generation. $\pm$: standard deviation, $L$: the length of the generated sequence, CR: Compression Ratio. All models evaluated on a single A100 GPU (40GB) using PyTorch \citep{paszke2019pytorch} after \texttt{torch.compile()}.}
    \label{tab:runtime}
\end{table}              
    \end{minipage}
\end{figure}

\paragraph{LLM Runtime Analysis}
We evaluate the runtime of the Llama-7B compressed by the BLAST matrices on the text generation task. For evaluation, we let the model generate the sequences of length $L=$10, 100, and 1000 ten times and report the average and the standard deviation of model runtime in \Cref{tab:runtime}.
The instruction we use to generate the desired sequence length is ``\texttt{Increasing sequence: one,}'' and the model generates the text sequence such as ``\texttt{two, three, }'' and so on, with a batch size of 1.
All runtime evaluation tests were conducted on a single 40GB NVIDIA A100 GPU after compiling the script using \texttt{torch.compile()}.
The 20\% compressed model shows a 12\%$\sim$15\% runtime reduction without any library function customization for BLAST matrix multiplication.
The speedup when $b=2$ is higher than when $b=16$ because a larger number of blocks increases the computation overhead to perform \Cref{eq:blast_mat_vec}.
Notably, the 50\% compression ratio provides 32\%$\sim$35\% runtime reduction when $b=16$.
We note that the test is highly memory-bandwidth-bounded. Thus the speedup reported in \Cref{tab:runtime} can be mostly attributed to the reduction of parameters (i.e., memory accesses) rather than the reduction in FLOPs due to BLAST compression.

\section{Related Works}\label{sec:related-works}
\paragraph{Structured Matrix with Shared Bases} Sharing the bases of block-structured matrices has recently drawn interest due to its considerable memory savings. BLAST matrices exemplify this approach. \citet{ashcraft2021block} extend the BLR \citep{amestoy2015improving} format to BLR$^2$, incorporating shared bases and block-wise low-rank coupling factors determined through LQ or QR factorization. Similarly, \citet{yen2023block} apply the concept of shared bases in the Block-Diagonal preconditioner for DNN weights. While BLAST also shares the bases of blocks, it is distinct in having a diagonal coupling matrix, as shown in \Cref{eq:blast-block}. The design of BLAST matrices aims to enhance efficiency and learn a variety of structures, from low-rank to high-rank block matrices. More importantly, identifying the diagonal coupling factors in BLAST matrices does not necessitate QR decomposition. Instead, they can be updated via gradient descent, making this approach well-suited for modeling the weight matrices in deep learning models.

\textbf{DNN Weight Pruning and Decomposition}
Earlier work on DNN pruning \citep{lecun1989optimal, hassibi1992second, han2016deep} identifies less important parameters from the Hessian or magnitude to sparsify the weight matrix. Unlike general sparse matrices, a group sparse matrix skips computation in a group-wise manner by pruning channels \citep{he2017channel, ma2023llm}. Sparse GPT \citep{frantar2023sparsegpt} successfully prunes large language models with 50\% sparsity without significantly degrading the performance of the original model. The model can achieve actual speedup utilizing 2:4 sparsity \citep{mishra2021accelerating} on specific GPUs. However, 2:4 sparsity requires accelerators with specific architectures (e.g., NVIDIA A100 GPUs) and supports only the 50\% compression ratio. On the other hand, BLAST is device-agnostic since it can be implemented with basic matrix arithmetic operations and offers diverse compression ratios.

Low-rank matrices have been widely adopted for CNN compression \citep{tai2016convolutional,jaderberg2014speeding} and Transformer compression \citep{hu2022lora,hajimolahoseini2022strategies}. Additionally, Butterfly \citep{dao2019learning} and Monarch \citep{dao2022monarch} factorization methods model high-rank but low-dimensional structures of the weights. Specifically, a Monarch matrix is a generalized version of a Butterfly matrix, yielding a wider spectrum of structured matrices. The number of blocks plays a key role in determining the rank of the Monarch matrix as a whole and does not generalize to another Monarch matrix with fewer blocks. Unlike Monarch, BLAST with $b \times b$ blocks can express Monarch matrices with the same or fewer number of blocks, including the global low-rank matrix, i.e., $b=1$.

\textbf{Learning Low-dimensional Structures of DNN Weights}
Similar to the BLAST matrix, a Gaudi-GBLR matrix \citep{lee2024differentiable} enables learning low-dimensional structures by gradient descent in the generalized structured matrix space. Gaudi-GBLR overlaps a variable number of zero-padded rank-1 blocks to model high-rank submatrices. Although Gaudi-GBLR can express a wider spectrum of matrices than BLAST, the matrix-vector multiplication for Gaudi-GBLR is less efficient because GPUs and typical neural processors cannot handle zero-padded vectors efficiently. In contrast, the BLAST matrix-vector operation does not involve zero padding, allowing for more efficient execution in hardware for the same FLOPs (as shown in \Cref{fig:cifar-vit-s}, \Cref{fig:imagenet-finetune}, and \Cref{tab:vit-imagenet-acc}).

\section{Conclusion and Future Work}\label{sec:conclusion}
In this work, we introduced the BLAST matrix designed to improve the inference efficiency of large DNNs. The BLAST matrix represents various  low-dimensional structures of the weight matrices with fewer parameters, while enabling efficient matrix-vector products. The BLAST factors are either  learnable from data or estimated from existing weights using our preconditioned factorization algorithm. Our results on both language and vision tasks highlight the effectiveness of BLAST.

\paragraph{Limitations and Future Work}
The BLAST matrix-vector product consists of three steps, as detailed in \Cref{eq:blast_mat_vec}, which may degrade hardware-execution parallelism. In our evaluation, we used the same computational budget 
$r$ for all matrices. Learning an adaptive budget per layer or matrix (e.g., via overparameterization \citep{yarascompressible,yaras2023law}) could further improve BLAST performance, which is left for future work. The proposed method has not been evaluated on tiny (<100M parameters) or extremely large (>10B parameters) DNNs. Additionally, optimizing runtime and power consumption via BLAST matrices with customized library functions and/or hardware accelerators also remains as future work. Furthermore, a deeper theoretical investigation into the behaviors of BLAST matrices would provide a more comprehensive understanding of their capabilities and limitations. Applying advanced re-training techniques, such as knowledge distillation \citep{hinton2015distilling} or iterative compression and distillation \citep{sreenivas2024llm}, to the BLAST compression pipeline is also left for future work.
Finally, beyond the weight structures, we expect BLAST can also help understand and exploit low-dimensional \textit{data} manifolds \citep{wang2024diffusion,li2024adapting,chen2024exploring} in future work.

\begin{ack}

This work was supported in part by COGNISENSE, one of seven centers in JUMP 2.0, a Semiconductor Research Corporation (SRC) program sponsored by DARPA.
SMK and QQ acknowledge funding support from NSF CAREER CCF-2143904, and NSF CCF-2212066. 
We thank Salar Fattahi, Reetuparna Das, Mingyu Yang, Sara Shoouri, Shrikant Arvavasu, Jayeon Yi, Andrew Larson, Pierre Abillama, Alireza Khadem, Yufeng Gu, and Can Yaras for helpful discussion and feedback.

\end{ack}

\bibliographystyle{unsrtnat}
\bibliography{blast_bib}

\newpage
\appendix

\section{Details on BLAST Matrix and Factorization}\label{sec:blast-details}

\paragraph{Matrix Multiplication Code Implementation}
Following the conventional setting in Transformers \citep{vaswani2017attention}, the input tensor is assumed to have batch, sequence, and channel dimensions. The left and right factors are multiplied using the batched matrix multiplication routine, whereas the diagonal factors are multiplied via broadcasting and summation. 

\begin{figure}[ht]
    \centering
    \definecolor{codegreen}{rgb}{0,0.6,0}
\definecolor{codegray}{rgb}{0.5,0.5,0.5}
\definecolor{codepurple}{rgb}{0.58,0,0.82}
\definecolor{backcolour}{rgb}{0.95,0.95,0.92}

\lstdefinestyle{mystyle}{
    commentstyle=\color{codegreen},
    keywordstyle=\color{magenta},
    numberstyle=\tiny\color{codegray},
    stringstyle=\color{codepurple},
    basicstyle=\ttfamily\footnotesize,
    breakatwhitespace=false,         
    breaklines=true,                 
    captionpos=b,                    
    keepspaces=true,                 
    showspaces=false,                
    showstringspaces=false,
    showtabs=false,                  
    tabsize=2,
    frame=single,
}
\lstset{style=mystyle}
\lstdefinelanguage{Pytorch}{
    language=Python,
    morekeywords={unsqueeze,rearrange,bmm},
}
    \begin{lstlisting}[language=Pytorch]
def blast_matmul(
    X,  # shape=(B, n, q*b), B=batch_size, n=num_seq
    U,  # shape=(b, p, r), b=num_blocks, r=BLAST rank
    S,  # shape=(b, b, r)
    Vt, # shape=(b, r, q)
):
    X = rearrange(X, "B n (q b) -> b (B n) q")
    Y = bmm(X, Vt.T)  # multiply right factor
    Z = Y.unsqueeze(0) * S.unsqueeze(2)  # multiply diagonal factor
    Z = Z.sum(1)  # aggregate, shape=(b, B*n, r)
    Out = bmm(Z, U.T)  # multiply left factor
    Out = rearrange(Out, "b (B n) p -> B n (b p)")
    return Out
    \end{lstlisting}
    \caption{Pseudocode of BLAST Matrix Multiplication. The function \texttt{bmm} stands for the batched matrix multiplication routine (e.g., \texttt{torch.bmm} \citep{paszke2019pytorch}).}
    \label{fig:blast-matmul}
\end{figure}

\subsection{More Special Cases of BLAST Matrix}\label{sec:more-special-cases}
\paragraph{Block diagonal matrix} A block-diagonal matrix is a BLAST matrix when $r=p$ and $\vs_{i,j}=\begin{cases}
    \boldsymbol{1}_r & \text{if }i=j \\
    \boldsymbol{0}_r & \text{otherwise}
\end{cases}$ since
\begin{align*}
    \begin{bmatrix}
        \mA_{1,1} & &  \\
        & \mA_{2,2} &  \\
        & & \ddots &  \\
        & & & \mA_{b,b}
    \end{bmatrix} &= \begin{bmatrix}
        \mU_{1}\diag(\vs_{1,1})\mV_1^T & &  \\
        & \mU_{2}\diag(\vs_{2,2})\mV_2^T &  \\
        & & \ddots &  \\
        & & & \mU_{b}\diag(\vs_{b,b})\mV_b^T
    \end{bmatrix}.
\end{align*}
When $r<p$, BLAST matrices model block-diagonal matrices which have low-rank diagonal blocks.

\textbf{Block low-rank (BLR) matrix} For ease of understanding, let us consider a BLR matrix of 9 $\frac{n}{3}\times \frac{n}{3}$ rank-1 blocks. Each block is composed of the unique bases $\mA_{i,j}=\vu_{i,j}\vv_{i,j}^T$.
Now consider $\mU_i=[\vu_{i,1},\vu_{i,2}, \vu_{i,3}]$ and $\mV_j=[\vv_{1,j}, \vv_{2,j}, \vv_{3,j}]$. Then, the BLR matrix is a BLAST matrix with $r=b=3$:
\begin{align*}
    \begin{bmatrix}
        \vu_{1,1}\vv_{1,1}^T & \vu_{1,2} \vv_{1,2}^T & \vu_{1,3} \vv_{1,3}^T  \\
        \vu_{2,1}\vv_{2,1}^T & \vu_{2,2} \vv_{2,2}^T & \vu_{2,3} \vv_{2,3}^T  \\
        \vu_{3,1}\vv_{3,1}^T & \vu_{3,2} \vv_{3,2}^T & \vu_{3,3} \vv_{3,3}^T  \\
    \end{bmatrix} 
    =
    \begin{bmatrix}
        \mU_1\mS_1\mV_1^T & \mU_1\mS_2\mV_2^T  & \mU_1\mS_3\mV_b^T \\
        \mU_2\mS_1\mV_1^T & \mU_2\mS_2 \mV_2^T & \mU_2\mS_3\mV_3^T \\
        \mU_3\mS_1\mV_1^T & \mU_3\mS_2 \mV_2^T & \mU_3\mS_3\mV_3^T
    \end{bmatrix}
\end{align*}
where $\mS_1=\diag([1,0,0]), \mS_2=\diag([0,1,0])$, and $\mS_3=\diag([0,0,1])$.

To model a $n\times n$ general $b\times b$ partitioned BLR matrix where the rank of each block is $t$, let us use the BLAST matrix with $b\times b$ blocks and $r=bt$.
The factors have the following shapes:
\begin{align*}
    \mU_i,\mV_j \in \R^{p\times (bt)}, \quad \vs_{i,j}\in \R^{bt}.
\end{align*}
By letting $s_{i,j,k}=\begin{cases}
    1 & \text{if } t(j-1)+1 \le k < tj+1 \\
    0 & \text{otherwise}
\end{cases}$, the BLAST matrix can model the BLR matrix.
Note that the number of parameters of the BLAST matrix is $2nr+rb^2$, whereas that of the BLR matrix in this case is $b^2\cdot (p+p)t=2(pb)(bt)=2nr$.
In other words, the BLAST matrix models various matrices with the cost of $rb^2$.

\subsection{Derivation of Preconditioning Matrices}\label{sec:precond-matrices}
We rewrite the loss function in \Cref{eq:blast-facto-loss} below:
\begin{align}\tag{4}
    \ell(\mU_{*},\mV_{*},\vs_{*,*}) = \sum_{i=1}^b\sum_{j=1}^b \frac{1}{2}\left\|\mA_{i,j}-\mU_{i}\diag(\vs_{i,j})\mV_j^T\right\|_F^2 .
\end{align}
We first derive the gradients of $\mU_i$, $\mV_j$, and $\vs_{i,j}$, then discuss the preconditioning matrix for each factor.

\subsubsection{Gradients}
Here we derive the gradients of \Cref{eq:blast-facto-loss} with respect to the BLAST factors.
We begin with introducing the short-handed notation for the concatenated factors:
\begin{align*}
    \bar{\mV}_i^T&=[\mS_{i,1}\mV_{1}^T \cdots \mS_{i,b}\mV_{b}^T], \\
    \bar{\mU}_j&=[(\mU_{1}\mS_{1,j})^T \cdots (\mU_{b}\mS_{b,j})^T]^T.
\end{align*}
That is, the matrix $\bar{\mV}_i$ is composed by concatenating $\mV_j^T$s horizontally along $j=1,2,\ldots,b$ after scaling them with $\mS_{i,j}$.
$\bar{\mU}_j$ is defined similarly by concatenating the scaled $\mU_{i}$s vertically.

Now we derive the gradients below.
    \paragraph{Gradient of $\mU_i$} We only have to consider the loss term related to $\mU_i$. Therefore, we have the following gradient expression:
    \begin{align}
     \nabla_{\mU_i} \ell(\mU_*,\mV_*,\vs_{*,*}) &=  \nabla_{\mU_i}   \sum_{j=1}^b \frac{1}{2}\left\|\mA_{i,j}-\mU_{i}\diag(\vs_{i,j})\mV_j^T\right\|_F^2  \nonumber \\
     &= \nabla_{\mU_i} \frac{1}{2}\left\|\mA_{i,*} - \mU_{i}\bar{\mV}_i^T\right\|_F^2 \nonumber\\
     &= (\mU_i\bar{\mV}_i^T - \mA_{i,*})\bar{\mV}_i,\label{eq:u_gradient}
    \end{align}
    where for the second equality we used the concatenated version of the first line.

    \paragraph{Gradient of $\mV_j$} follows the similar derivation as \Cref{eq:u_gradient}:
    \begin{align}
     \nabla_{\mV_j} \ell(\mU_*,\mV_*,\vs_{*,*}) &=  \nabla_{\mV_j}   \sum_{i=1}^b \frac{1}{2}\left\|\mA_{i,j}-\mU_{i}\diag(\vs_{i,j})\mV_j^T\right\|_F^2  \nonumber\\
     &= \nabla_{\mV_j} \frac{1}{2}\left\|\mA_{*,j} - \bar{\mU}_{j}\mV_j^T\right\|_F^2 \nonumber\\
     &= (\bar{\mU}_j{\mV}_j^T - \mA_{*,j})^T\bar{\mU}_j.\label{eq:v_gradient}
    \end{align}

    \paragraph{Gradient of $\vs_{i,j}$}
    We consider the block-wise loss for the gradient:
    \begin{align}
        \nabla_{\vs_{i,j}} \ell(\mU_*,\mV_*,\vs_{*,*}) &=  \nabla_{\vs_{i,j}}   \frac{1}{2}\left\|\mA_{i,j}-\mU_{i}\diag(\vs_{i,j})\mV_j^T\right\|_F^2. \label{eq:s_derivative}
    \end{align}
    Since the Frobenius norm can be expressed by a matrix trace, the loss is written as follows:
    \begin{align*}
        \left\|\mA_{i,j}-\mU_{i}\diag(\vs_{i,j})\mV_j^T\right\|_F^2 &= \mathrm{Tr}\left(\left(\mA_{i,j}-\mU_i\mS_{i,j}\mV_j^T\right)^T\left(\mA_{i,j}-\mU_i\mS_{i,j}\mV_j^T\right)\right) \\
        &= \mathrm{Tr}\left(
            \mV_j\mS_{i,j}\mU_i^T\mU_i\mS_{i,j}\mV_j^T - 2\mA_{i,j}^T\mU_i\mS_{i,j}\mV_j^T + \mA_{i,j}^T\mA_{i,j}
        \right) \\
        &= \mathrm{Tr}\left(
             \mS_{i,j}\mV_j^T\mV_j\mS_{i,j}\mU_i^T\mU_i - 2 \mS_{i,j}\mV_j^T\mA_{i,j}^T\mU_i + \mA_{i,j}^T\mA_{i,j} 
        \right),
    \end{align*}
    where $\mathrm{Tr}(\mX)$ is the trace of $\mX$.
    Note that the derivative of product in trace is given by $\nabla_{\mX}\mathrm{Tr}(\mX\mY)=\mY^T$ for any two conformal matrices $\mX$ and $\mY$.  
    Therefore, we have
    \begin{align*}
        \nabla_{\mS_{i,j}}\frac{1}{2}\left\|\mA_{i,j}-\mU_{i}\diag(\vs_{i,j})\mV_j^T\right\|_F^2 &= \mU_i^T\mU_i\mS_{i,j}\mV_j^T\mV_j - \mU_i^T\mA_{i,j}\mV_j.
    \end{align*}
    Now \Cref{eq:s_derivative} becomes as follows:
    \begin{align}
        \nabla_{\vs_{i,j}}   \frac{1}{2}\left\|\mA_{i,j}-\mU_{i}\diag(\vs_{i,j})\mV_j^T\right\|_F^2 &= \diag\left( \mU_i^T\mU_i\mS_{i,j}\mV_j^T\mV_j - \mU_i^T\mA_{i,j}\mV_j\right). \label{eq:s_derivative_diag}
    \end{align}
    The first term on the right hand side is further arranged by using the fact that $\diag\left(\mX\mY^T\right)=\left(\mX\odot\mY\right) \boldsymbol{1}$ for any two matrices $\mX,\mY$ of the same size.
    \begin{align}
        \diag\left(\left(\mU_i^T\mU_i\mS_{i,j}\right)\left(\mV_j^T\mV_j\right)\right)  &= 
        \left[ \left(\mU_i^T\mU_i \diag(\vs_{i,j})\right) \odot \left(\mV_j^T\mV_j\right) \right] \boldsymbol{1}_r \nonumber \\
        &= \left(\left(\mU_i^T\mU_i\right) \odot \left(\mV_j^T\mV_j\right)\right) \vs_{i,j}.
    \end{align}
    Hence, the gradient of $\vs_{i,j}$ is now expressed as follows:
    \begin{align}
        \nabla_{\vs_{i,j}}\ell(\mU_*,\mV_*,\vs_{*,*}) &= \left((\mU_i^T\mU_i) \odot (\mV_j^T\mV_j)\right)\vs_{i,j} - \diag\left(\mU_{i}^T\mA_{i,j}\mV_j\right). \label{eq:s_gradient}
    \end{align}

\subsubsection{Preconditioning Matrices}
Now let us derive the preconditioning matrices used in \Cref{algo:blast-facto-precgd}.
    \newcommand{\mPU}{\mP_{\mU_i}}
    \paragraph{Preconditioning matrix $\mPU$ for $\mU_i$}

    \newcommand{\hmU}{\hat{\mU}}
    \newcommand{\bmV}{\bar{\mV}}
    Let $\mV_j$ and $\vs_{i,j}$ are given.
    Let us consider the case when $\mU_i$ is at the stationary point $\hmU$ which satisfies 
    \begin{align*}
        \nabla_{\hmU} \frac{1}{2}\|\mA_{i,*} - \hmU\bmV_i^T\|_F^2 &= \hmU\bmV_i^T\bmV_i - \mA_{i,*}\bmV_i \\
        &= \mO,
    \end{align*}
    where $\mO$ is the zero matrix.
    This gives us the normal equation
    \begin{align}
        \mA_{i,*}\bmV_i = \hmU\bmV_i^T\bmV_i. \label{eq:normal_eq_U}
    \end{align}
    Now consider a preconditioned gradient descent with $\mPU\in\R^{r\times r}$:
    \begin{align*}
        \mU_i' &= \mU_i - \left(\mU_i\bmV_i^{T}\bmV_i - \mA_{i,*}\bmV_i\right)\mPU \\
        &= \mU_i - \left(\mU_i\bmV_i^{T}\bmV_i - \hmU\bmV_i^T\bmV_i\right)\mPU \\
        &= \mU_i - \left(\mU_i-\hmU\right)\bmV_i^{T}\bmV_i\mPU \\
        \Longrightarrow \mU_i'-\hmU_i &= \left(\mU_i-\hmU\right)\left(\mI-\bmV_i^T\bmV_i \mPU\right) 
    \end{align*}
    Suppose $\bmV_i^T\bmV_i$ is invertible. Then the ideal preconditioner is 
    \begin{align}
        \mPU^{\star} = (\bmV_i^T\bmV_i)^{-1}
    \end{align}
    since it brings $\mU_i'$ to the stationary point $\hmU$.
    However, directly use the inverse of $\bmV_i^T\bmV$ might result in numerical instability or complete breakdown of the algorithm when the matrix is singular.
    By following \citep{zhang2021preconditioned}, we use the regularized version 
    \begin{align}
        \mPU = \left(\bmV_i^T\bmV_i + \delta \mI\right)^{-1}
    \end{align}
    where $\delta$ is chosen by
    \begin{align}
        \delta = \delta_0 \cdot \sqrt{\ell(\mU_*,\mV_*,\vs_{*,*})}, \quad \delta_0 > 0. \label{eq:delta-selection}
    \end{align}

    \newcommand{\mPV}{\mP_{\mV_j}}
    \paragraph{Preconditioning matrix $\mPV$ for $\mV_j$}

    The preconditioner for $\mV_j$ can be derived by following the similar steps for $\mPU$.
    Here we present the result:
    \begin{align}
        \mPV = \left(\bar{\mU}_j^T\bar{\mU}_j+\delta\mI\right)^{-1}
    \end{align}
    where $\delta$ is chosen by \Cref{eq:delta-selection}.

    \newcommand{\mPS}{\mP_{\vs_{i,j}}}
    \paragraph{Preconditioning matrix $\mPS$ for $\vs_{i,j}$}

    \newcommand{\hmS}{\hat{\mS}}
    \newcommand{\hvs}{\hat{\vs}}
    We again consider the stationary point $\hvs$ or the diagonal version $\hmS=\diag(\hvs)$ which has a zero gradient:
    \begin{align*}
        \nabla_{\hvs} \frac{1}{2}\|\mA_{i,j} - \mU_i\hmS\mV_j^T\|_F^2 &= \left((\mU_i^T\mU_i) \odot (\mV_j^T\mV_j)\right)\hvs - \diag\left(\mU_{i}^T\mA_{i,j}\mV_j\right) = \boldsymbol{0} \\
        \Longrightarrow \left((\mU_i^T\mU_i) \odot (\mV_j^T\mV_j)\right)\hvs &= \diag\left(\mU_{i}^T\mA_{i,j}\mV_j\right).
    \end{align*}
    Therefore, \Cref{eq:s_gradient} can be written as follows:
    \begin{align*}
        \nabla_{\vs_{i,j}}\ell(\mU_*,\mV_*,\vs_{*,*}) &= \left((\mU_i^T\mU_i) \odot (\mV_j^T\mV_j)\right)(\vs_{i,j} - \hvs).
    \end{align*}
    Now consider the preconditioned gradient descent:
    \begin{align*}
        \vs_{i,j}' &= \vs_{i,j} - \mPS \left((\mU_i^T\mU_i) \odot (\mV_j^T\mV_j)\right)(\vs_{i,j} - \hvs) \\
        \Longrightarrow \vs_{i,j}' - \hvs &= \vs_{i,j} - \hvs - \mPS\left((\mU_i^T\mU_i) \odot (\mV_j^T\mV_j)\right)(\vs_{i,j} - \hvs) \\ 
        &= \left(\mI - \mPS\left((\mU_i^T\mU_i) \odot (\mV_j^T\mV_j)\right)\right)(\vs_{i,j} - \hvs) 
    \end{align*}
    The Hadamard product of two positive definite matrices are still positive definite (see \cite[Theorem~7.5.3]{horn2012matrix}, also known as Schur's Product Theorem).
    Hence, if both $\mU_i^T\mU_i$ and $\mV_j^T\mV_j$ are positive definite so that invertible, $(\mU_i^T\mU_i) \odot (\mV_j^T\mV_j)$ is also invertible.
    The ideal preconditioner when both matrices are invertible is therefore 
    \begin{align}
        \mPS^{\star} = \left((\mU_i^T\mU_i) \odot (\mV_j^T\mV_j)\right)^{-1}.
    \end{align}
    Same as for $\mPU$ and $\mPV$, we also consider the regularized version:
    \begin{align}
        \mPS = \left((\mU_i^T\mU_i) \odot (\mV_j^T\mV_j)+\delta\mI\right)^{-1}.
    \end{align}
    Here, $\delta$ is chosen by \Cref{eq:delta-selection}.

\section{Proof of Theorem \ref{thm:gd-decreasing-loss}}\label{sec:proofs}

We first introduce the following properties:
\begin{lemma}{\cite[Lemma~3.4]{bubeck2015convex}} \label{lem:smooth}
    Assume $f:\R^{n}\to\R$ is convex and continuously differentiable, and its gradient is $L$-Lipschitz continuous.
    Then for any $\vx,\vy\in\R^n$, one has
    \begin{align*}
        f(\vy)-f(\vx) - \langle \nabla f(\vx), \vy-\vx\rangle \le \frac{L}{2}\|\vy-\vx\|_2^2.
    \end{align*}
\end{lemma}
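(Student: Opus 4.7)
My proof plan rests on applying the descent lemma (Lemma~\ref{lem:smooth}) separately to each of the three block-coordinate updates in Equations~(\ref{eq:blast-gd-u})--(\ref{eq:blast-gd-s}) and then chaining the three resulting inequalities. The structural fact that makes this work is that, when all factors except one block variable are held fixed, the loss $\ell$ reduces to a convex quadratic in that variable, so the descent lemma applies directly.

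First I would rewrite $\ell$ in a form that isolates each block variable. For fixed $i$, the only terms of $\ell$ that involve $\mU_i$ sum to $\tfrac{1}{2}\|\mA_{i,*}-\mU_i\bar{\mV}_i^{(k)T}\|_F^2 + C$, with $C$ independent of $\mU_i$; this is convex quadratic in $\mU_i$ and its gradient is Lipschitz with constant $L_{\mU_i}^{(k)}=\sigma_1(\bar{\mV}_i^{(k)T}\bar{\mV}_i^{(k)})$. The symmetric computation for $\mV_j$ yields $L_{\mV_j}^{(k)}=\sigma_1(\bar{\mU}_j^{(k)T}\bar{\mU}_j^{(k)})$. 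For $\vs_{i,j}$, the gradient derived in \Cref{sec:precond-matrices} is $[(\mU_i^T\mU_i)\odot(\mV_j^T\mV_j)]\vs_{i,j}-\diag(\mU_i^T\mA_{i,j}\mV_j)$, so the corresponding Hessian is $(\mU_i^T\mU_i)\odot(\mV_j^T\mV_j)$, giving $L_{\vs_{i,j}}^{(k)}=\sigma_1((\mU_i^{(k+1)T}\mU_i^{(k+1)})\odot(\mV_j^{(k+1)T}\mV_j^{(k+1)}))$. A key observation is that different $\mU_i$'s, different $\mV_j$'s, and different $\vs_{i,j}$'s each appear in disjoint subsets of the summands of $\ell$, so they can all be updated in parallel within a single sweep without destroying the per-variable descent property.

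Next I would apply the descent lemma three times in the Gauss--Seidel order of Equations~(\ref{eq:blast-gd-u})--(\ref{eq:blast-gd-s}). For each sub-problem the restricted loss is convex and smooth, and the prescribed step size satisfies $\eta\le 1/L$; Lemma~\ref{lem:smooth} therefore yields $\ell(x^{+})\le \ell(x)-\tfrac{\eta}{2}\|\nabla \ell(x)\|^{2}\le \ell(x)$ at every step. Because each sweep uses the latest values produced by the previous sweep, these three inequalities telescope to $\ell(\mU_*^{(k+1)},\mV_*^{(k+1)},\vs_{*,*}^{(k+1)})\le \ell(\mU_*^{(k)},\mV_*^{(k)},\vs_{*,*}^{(k)})$, which is exactly the theorem.

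The main technical obstacle will be the $\vs_{i,j)}$ step. One must verify that the Hessian $(\mU_i^T\mU_i)\odot(\mV_j^T\mV_j)$ is positive semidefinite (so that the restricted loss is genuinely convex and its Lipschitz constant equals $\sigma_1$ of that matrix); this follows from Schur's product theorem, since both factors are Gram matrices. Once positivity and the correct Lipschitz constant are in place, the remainder of the argument is a routine composition of three instances of the descent lemma.
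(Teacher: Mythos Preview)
Your proposal does not prove the stated lemma. The statement in question is Lemma~\ref{lem:smooth} itself --- the classical smoothness inequality for convex $C^1$ functions with $L$-Lipschitz gradient --- and the paper's own proof is simply a citation to \cite[Lemma~3.4]{bubeck2015convex}. A self-contained argument would proceed, for example, by writing $f(\vy)-f(\vx)=\int_0^1\langle\nabla f(\vx+t(\vy-\vx)),\vy-\vx\rangle\,dt$, subtracting $\langle\nabla f(\vx),\vy-\vx\rangle$, and bounding the integrand via the $L$-Lipschitz property of $\nabla f$; convexity is in fact not needed for this direction of the inequality.

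What you have written is instead a proof outline for Theorem~\ref{thm:gd-decreasing-loss}, which \emph{applies} Lemma~\ref{lem:smooth} rather than establishing it. On that separate question your argument coincides with the paper's: isolate each block variable, recognize the resulting sub-problem as convex quadratic, read off the Lipschitz constant (invoking Schur's product theorem for the $\vs_{i,j}$ Hessian), apply the descent lemma with step size $\eta\le 1/L$, and chain the three inequalities in Gauss--Seidel order. But none of this bears on the lemma you were asked to prove.
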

\begin{proof}
    See \cite[Lemma~3.4]{bubeck2015convex}.
\end{proof}

\begin{lemma}\label{lem:grad-loss-decr}
    Assume $f:\R^{n}\to\R$ is convex and continuously differentiable, and its gradient is $L$-Lipschitz continuous.
    Consider a gradient descent update 
    \begin{align*}
        \vx^{(k+1)} = \vx^{(k)} - \eta \cdot \nabla f(\vx^{(k)}).
    \end{align*}
    Then, with the step size $0<\eta\le \frac{1}{L}$, the following holds:
    \begin{align*}
        f(\vx^{(k+1)}) \le f(\vx^{(k)}) - \frac{1}{2L} \|\nabla f(\vx^{(k)})\|_2^2.
    \end{align*}
    That is, the gradient descent update does not increase the function value.
\end{lemma}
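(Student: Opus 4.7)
The plan is to apply Lemma~\ref{lem:smooth} directly to the gradient descent iterates. First, I would set $\vx = \vx^{(k)}$ and $\vy = \vx^{(k+1)} = \vx^{(k)} - \eta\,\nabla f(\vx^{(k)})$ so that $\vy - \vx = -\eta\,\nabla f(\vx^{(k)})$. The inner-product term becomes $\langle \nabla f(\vx^{(k)}),\, -\eta\,\nabla f(\vx^{(k)})\rangle = -\eta\,\|\nabla f(\vx^{(k)})\|_2^2$, and the quadratic remainder becomes $\tfrac{L}{2}\|\eta\,\nabla f(\vx^{(k)})\|_2^2 = \tfrac{L\eta^2}{2}\|\nabla f(\vx^{(k)})\|_2^2$. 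Combining these two in Lemma~\ref{lem:smooth} yields the intermediate estimate
\begin{equation*}
f(\vx^{(k+1)}) \;\le\; f(\vx^{(k)}) - \eta\!\left(1 - \tfrac{L\eta}{2}\right)\|\nabla f(\vx^{(k)})\|_2^2.
\end{equation*}

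Next, I would analyze the scalar coefficient $\eta(1 - L\eta/2)$ on the interval $(0, 1/L]$. Viewed as a concave parabola in $\eta$, its maximum on this interval is attained at the endpoint $\eta = 1/L$ and equals $\tfrac{1}{2L}$. Substituting $\eta = 1/L$ into the intermediate estimate immediately yields the claimed bound $f(\vx^{(k+1)}) \le f(\vx^{(k)}) - \tfrac{1}{2L}\|\nabla f(\vx^{(k)})\|_2^2$. For any smaller $\eta$ in the admissible range, the coefficient $\eta(1 - L\eta/2)$ is strictly positive (since $1 - L\eta/2 \ge 1/2 > 0$), which directly establishes the ``does not increase the function value'' conclusion used in the subsequent argument.

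I expect no serious obstacle here: this is the textbook descent lemma for $L$-smooth functions, and Lemma~\ref{lem:smooth} has already supplied the only nontrivial ingredient. The only minor subtlety is recognizing that the sharp constant $\tfrac{1}{2L}$ in the conclusion corresponds to the endpoint step size $\eta = 1/L$, so either the lemma is applied with that specific step size or one should record the slightly more general bound $f(\vx^{(k+1)}) \le f(\vx^{(k)}) - \tfrac{\eta}{2}\|\nabla f(\vx^{(k)})\|_2^2$, which is enough for the non-increase statement. Note that convexity of $f$ is not actually needed for this descent inequality; $L$-smoothness alone suffices, so Lemma~\ref{lem:smooth} is being invoked in a regime where its convexity hypothesis is slack.
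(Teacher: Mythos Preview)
Your approach is essentially identical to the paper's: apply Lemma~\ref{lem:smooth} with $\vy=\vx^{(k+1)}$, $\vx=\vx^{(k)}$, collect the coefficient $\eta(1-L\eta/2)$, and use $\eta L\le 1$ to bound it below by $\eta/2$. You are in fact more careful than the paper about the final step: the paper chains $f(\vx^{(k)})-\tfrac{\eta}{2}\|\nabla f(\vx^{(k)})\|_2^2 \le f(\vx^{(k)})-\tfrac{1}{2L}\|\nabla f(\vx^{(k)})\|_2^2$, which is the wrong direction for $\eta<1/L$, whereas you correctly observe that the sharp $\tfrac{1}{2L}$ decrease holds only at the endpoint $\eta=1/L$, with the weaker $\tfrac{\eta}{2}$ decrease (still sufficient for the monotonicity used in Theorem~\ref{thm:gd-decreasing-loss}) holding throughout $(0,1/L]$. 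Your remark that convexity is not actually needed here is also correct.
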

\newcommand{\vxk}{\vx^{(k)}}
\newcommand{\vxkp}{\vx^{(k+1)}}
\begin{proof}
    By Lemma \ref{lem:smooth}, 
    \begin{align*}
        f(\vxkp) &\le f(\vxk) + \langle \nabla f(\vxk), \vxkp - \vxk\rangle + \frac{L}{2}\|\vxkp-\vxk\|_2^2 \\
        &= f(\vxk) - \eta \cdot \|\nabla f(\vxk)\|_2^2 + \frac{\eta^2 L}{2}\|\nabla f(\vxk)\|_2^2 \\
        &= f(\vxk) - \eta\cdot \left(1 - \frac{\eta L}{2}\right)\|\nabla f(\vxk)\|_2^2 \\
        &\le f(\vxk) - \frac{\eta}{2}\|\nabla f(\vxk)\|_2^2 \quad (\text{since }\eta L \le 1) \\
        &\le  f(\vxk) - \frac{1}{2L}\|\nabla f(\vxk)\|_2^2.
    \end{align*}
\end{proof}

Now we prove Theorem \ref{thm:gd-decreasing-loss}, which we restate below.
\newtheorem*{theorem1}{Theorem 1}
\begin{theorem1}
    Let $\mA_{i,j}\in\R^{p\times p}$ be a target block and $\mU_i^{(k)},\mV_j^{(k)}\in\R^{p\times r}$, and $\vs_{i,j}^{(k)}\in\R^{r}$ be factors of a block in the BLAST matrix to be optimized.
    With the step sizes $0<\eta_{\mU_i^{(k)}}\le 1/\sigma_1\left(\bar{\mV}_i^{(k)T}\bar{\mV}_i^{(k)}\right),$ 
    $0<\eta_{\mV_j^{(k)}}\le1/\sigma_1\left(\bar{\mU}_j^{(k)T}\bar{\mU}_j^{(k)}\right),$ 
    $ 0<\eta_{\vs_{i,j}^{(k)}}\le 1/\sigma_1\left((\mU_i^{(k+1)T}\mU_i^{(k+1)}) \odot (\mV_j^{(k+1)T}\mV_j^{(k+1)})\right)$, the gradient descent updates in \Cref{eq:blast-gd-u,eq:blast-gd-v,eq:blast-gd-s} monotonically non-increase the loss: 
    \begin{align*}
        \ell(\mU_{*}^{(k+1)}, \mV_{*}^{(k+1)}, \vs_{*,*}^{(k+1)}) &\le \ell(\mU_{*}^{(k)}, \mV_{*}^{(k)}, \vs_{*,*}^{(k)}). 
    \end{align*}
\end{theorem1}
\begin{proof}
To prove \Cref{thm:gd-decreasing-loss}, we first show that each step of \Cref{eq:blast-gd-u,eq:blast-gd-v,eq:blast-gd-s} satisfies \Cref{lem:grad-loss-decr} under the given conditions. Then we resemble the results to construct the bound.

\textbf{Gradient Descent Update on $\mU_i$}
Let us denote the loss term regarding $\mU_i$ by 
\begin{align*}
    \ell(\mU_i) &= \frac{1}{2}\left\| \mA_{i,*} - \mU_{i} \bar{\mV}_i^T \right\|_F^2.
\end{align*}
Since (i) the Frobenius norm is convex, (ii) all linear mappings are convex, and (iii) a composition of two convex functions are convex, $\left\| \mA_{i,*} - \mU_{i} \bar{\mV}_i^T \right\|_F^2$ is a convex function of $\mU_i$.
Also, the gradient we derived in \Cref{eq:u_gradient} always exists and has the following vectorized form:
\begin{align*}
    \vect(\nabla \ell(\mU_i)) &= \vect(\mU_i\bar{\mV}_i^T\bar{\mV}_i) - \vect(\mA_{i,*}\bar{\mV}_i) \\
    &= ((\bmV_i^T\bmV_i)^T\otimes \mI) \vu_i - \vect(\mA_{i,*}\bar{\mV}_i),
\end{align*}
where $\otimes$ denotes the Kronecker product and $\vu_i=\vect(\mU_i)$.
The Lipschitz constant of the gradient is the largest singular value of the matrix $(\bmV_i^T\bmV_i)^T\otimes \mI$, which is the largest singular value of $\bmV_i^T\bmV_i$ since the Kronecker product of two matrices of singular values $\boldsymbol{\Sigma}_1$ and $\boldsymbol{\Sigma}_2$ has the singular values of $\boldsymbol{\Sigma}_1\otimes \boldsymbol{\Sigma}_2$ (see \cite[Theorem~4.2.15]{horn1994topics}).

Therefore, from \Cref{lem:grad-loss-decr}, we obtain the following bound:
\begin{align}
    \ell(\mU_i^{(k+1)}) &\le \ell(\mU_i^{(k)}) - \frac{\|\nabla_{\mU_i^{(k)}}\ell(\mU_i^{(k)})\|_F^2}{2\sigma_1\left(\bar{\mV}_i^{(k)T}\bar{\mV}_i^{(k)}\right)}, \quad i=1,\ldots,b. \label{eq:grad-bound-u}
\end{align}

\newcommand{\bmU}{\bar{\mU}}
\textbf{Gradient Descent Update on $\mV_j$}
The  loss function $\ell(\mV_j) = \frac{1}{2}\|\mA_{*,j} - \bmU_j\bmU_j^T\|_F^2$ with respect to $\mV_j$ is convex to $\mV_j$ and the gradient of $\mV_j$ in \Cref{eq:v_gradient} also always exists and can be rewritten as follows:
\begin{align*}
    \vect(\nabla \ell(\mV_j)) &= \vect(\mV_j\bmU_j^T\bmU_j) - \vect(\mA_{*,j}^T\bmU_j) \\
    &= ((\bmU_j^T\bmU_j) \otimes \mI) \vv_j - \vect(\mA_{*,j}^T\bmU_j).
\end{align*}
The Lipschitz constant of the gradient is again the largest singular value of $\bmU_j^T\bmU_j$.
We have the bound from \Cref{lem:grad-loss-decr} similar to \Cref{eq:grad-bound-u}:
\begin{align}
    \ell(\mV_j^{(k+1)}) &\le \ell(\mV_j^{(k)}) - \frac{\|\nabla_{\mV_j^{(k)}}\ell(\mV_i^{(k)})\|_F^2}{2\sigma_1\left(\bar{\mU}_j^{(k)T}\bar{\mU}_j^{(k)}\right)}, \quad j=1,\ldots,b. \label{eq:grad-bound-v}
\end{align}

\textbf{Gradient Descent Update on $\vs_{i,j}$}
The loss function
\begin{align*}
\ell(\vs_{i,j}) &= \frac{1}{2}\left\| \mA_{i,j} - \mU_{i} \diag(\vs_{i,j}) \mV_j^T \right\|_F^2 
\end{align*}
is also convex in $\vs_{i,j}$ since $\diag(\cdot)$ is a convex mapping.
We know the gradient exists from \Cref{eq:s_gradient}:
\begin{align*}
\nabla \ell(\vs_{i,j}) &= \diag\left(\mU_{i}^T\mU_i \diag(\vs_{i,j}) \mV_j^T\mV_j\right) - \diag\left(\mU_{i}^T\mA_{i,j}\mV_j\right) \\
&= \left(\left(\mU_{i}^T\mU_i\right) \odot \left(\mV_{j}^T\mV_j\right)\right) \vs_{i,j} - \diag\left(\mU_{i}^T\mA_{i,j}\mV_j\right),
\end{align*}
and the Lipschitz constant of the gradient is $\sigma_1\left(\left(\mU_{i}^T\mU_i\right) \odot \left(\mV_{j}^T\mV_j\right)\right)$.
The bound from \Cref{lem:grad-loss-decr} for the diagonal factors is as follows:
\begin{align}
    \ell(\vs_{i,j}^{(k+1)}) &\le \ell(\vs_{i,j}^{(k)}) - \frac{\|\nabla_{\vs_{i,j}^{(k)}}\ell(\vs_{i,j}^{(k)})\|_2^2}{2\sigma_1\left(\left(\mU_{i}^T\mU_i\right) \odot \left(\mV_{j}^T\mV_j\right)\right)}, \quad i,j=1,\ldots,b. \label{eq:grad-bound-s}
\end{align}
Combining \Cref{eq:grad-bound-u,eq:grad-bound-v,eq:grad-bound-s}, we retrieve the bound in \Cref{thm:gd-decreasing-loss}:
\begin{align*}
    \ell(\mU_{*}^{(k+1)}, \mV_{*}^{(k)}, \vs_{*,*}^{(k)}) &\le \ell(\mU_{*}^{(k)}, \mV_{*}^{(k)}, \vs_{*,*}^{(k)}), \\
    \ell(\mU_{*}^{(k+1)}, \mV_{*}^{(k+1)}, \vs_{*,*}^{(k)}) &\le \ell(\mU_{*}^{(k+1)}, \mV_{*}^{(k)}, \vs_{*,*}^{(k)}), \\
    \ell(\mU_{*}^{(k+1)}, \mV_{*}^{(k+1)}, \vs_{*,*}^{(k+1)}) &\le \ell(\mU_{*}^{(k+1)}, \mV_{*}^{(k+1)}, \vs_{*,*}^{(k)}) \\
    \Longrightarrow\ell(\mU_{*}^{(k+1)}, \mV_{*}^{(k+1)}, \vs_{*,*}^{(k+1)}) &\le \ell(\mU_{*}^{(k)}, \mV_{*}^{(k)}, \vs_{*,*}^{(k)}) \\
\end{align*}

\end{proof}

\section{Experimental Details}\label{sec:experimental-details}
In this section, we provide the experimental details. Throughout the experiments, we used 8 NVIDIA A40 GPUs or 4 NVIDIA L40S GPUs for training and evaluation, and a single NVIDIA A100 GPU with 40GB memory for runtime evaluation.

\subsection{Datasets and Benchmarks}

\paragraph{Image Datasets}
For image classification tasks, we use CIFAR-10 \citep{Krizhevsky09learningmultiple}, CIFAR-100 \citep{Krizhevsky09learningmultiple}, and ImageNet-1k \citep{imagenet15russakovsky} datasets for our experiments. 
CIFAR-10 and 100 contain 50,000 training and 10,000 test images, each of which is $32\times 32$ color images of 10 and 100 classes, respectively.
ImageNet-1k consists of 1,281,167 training and 50,000 validation images of 1,000 classes.

\paragraph{Common Sense Reasoning Benchmarks}
For our large language model evaluation, we use the following common sense reasoning benchmarks:
Physical Interaction: Question Answering (PIQA) \citep{bisk2020piqa}, HellaSwag\citep{zellers2019hellaswag}, WinoGrande\citep{sakaguchi2021winogrande}, BoolQ\citep{clark2019boolq}, OpenBookQA\citep{mihaylov2018can}, AI2's Reasoning Challenge (ARC)-easy and challenge \citep{clark2018think}.
PIQA targets the task of physical common sense reasoning, with 16,000 examples for training, 2,000 for development, and 3,000 for testing. HellaSwag is composed of 10k questions that are specifically hard for the machines, though trivial for humans (95\% accuracy).
Winogrande is a large-scale dataset of 44k pronoun resolution problems. 
BoolQ consists of 15,942 yes/no questions.
OpenBookQA is modeled after open book exams for assessing human understanding of a subject, containing 5,957 multiple-choice elementary-level science questions (4,957 train, 500 dev, 500 test). 
AI2's Reasoning Challenge (ARC) dataset consists of 7,787 multiple-choice science exam questions, and the questions are categorized into ``easy'' and ``challenging'' subsets.

\subsection{CIFAR-10/100 and ImageNet-1k Image Classification Training}
For CIFAR-10 and CIFAR-100 training, we trained the ViT-Small models with $4\times 4$-sized patches \citep{dosovitskiy2020image}. We trained ViT-Base models with $16\times 16$-sized patches for ImageNet-1k training. All models were trained by the AdamW \citep{loshchilov2018decoupled} optimizer.

In the ViT models, we replaced the weight matrices of query, key, and value projection layers in one attention module and those in the feed-forward modules.
In addition, we stacked the weights of query, key, and value weights and modeled them by one BLAST matrix.

The BLAST factors were randomly initialized to have the desired standard deviation $\sigma=0.02$ while having zero-mean, where the standard deviation $0.02$ was also used for initializing the weights of the original-sized ViTs.
Specifically, we initialized the factors as follows:
\begin{align*}
    \mU_i&\sim\gN(\boldsymbol{0},\sqrt{0.02}\mI), \quad \forall i=1,2,\ldots,b,\\
    \mV_j&\sim\gN(\boldsymbol{0},\sqrt{0.02}\mI), \quad \forall j=1,2,\ldots,b,\\
    \vs_{i,j}&\sim\mathrm{Unif}(0.0, 2.0),\quad \forall i,j=1,2,\ldots,b,
\end{align*}
where $\mathrm{Unif}(0.0, 2.0)$ denotes a uniform distribution on the interval $[0,2]$.

For all models, we applied AutoAugment \citep{cubuk2018autoaugment}.
We summarize the training hyperparameters in \Cref{tab:pretraining-hyperparam}.

\begin{table}[ht]
    \centering
    \resizebox{\linewidth}{!}{
\begin{tabular}{c|c|ccccccccccc}
\hline
Dataset   & Model                  & Epochs               & \makecell{Weight \\ Decay}          & Batch Size            & \makecell{Warmup \\ Epochs}      & \makecell{Warmup Start}          & \makecell{LR\\Scheduler}            & LR                    & LR Min                & Dropout            & Droppath             & BLAST $b$          \\ \hline
CIFAR-10  & \multirow{2}{*}{ViT-S} & \multirow{2}{*}{310} & \multirow{2}{*}{0.05} & \multirow{2}{*}{1024} & \multirow{2}{*}{5} & \multirow{2}{*}{1e-6} & \multirow{2}{*}{cosine} & \multirow{2}{*}{5e-4} & \multirow{2}{*}{1e-5} & \multirow{2}{*}{0} & \multirow{2}{*}{0.1} & \multirow{2}{*}{3} \\
CIFAR-100 &                        &                      &                       &                       &                    &                       &                         &                       &                       &                    &                      &                    \\ \hline
ImageNet  & ViT-B                  & 310                  & 0.05                  & 1024                  & 10                 & 1e-6                  & cosine                  & 1e-3                  & 1e-5                  & 0                  & 0                    & 3                 \\ \hline
\end{tabular}
}
    \caption{Hyperparameters used in training from scratch.}
    \label{tab:pretraining-hyperparam}
\end{table}

\subsection{Compression and Re-training}\label{sec:hyperparam-comp-retrain}

\paragraph{ViT on ImageNet-1k}
For ImageNet-1k compression and re-training, we followed a similar strategy to the ImageNet training with minor changes, summarized in \Cref{tab:finetuning-hyperparam}. The ViT-Base with $16\times 16$-sized patches was chosen as a baseline model.  
We decomposed the pre-trained weight matrices of ViT-Base by \Cref{algo:blast-facto-precgd} with $K=300$ and $\delta_0=0.1$. Also, we linearly decayed the step size from 1.0 to 0.0.
Then, all compressed models were trained by the AdamW \citep{loshchilov2018decoupled} optimizer.

\begin{table}[ht]
    \centering
    \resizebox{\linewidth}{!}{
\begin{tabular}{c|c|cccccccccccc}
\hline
Dataset   & Model                  & Epochs        & Steps       & \makecell{Weight \\ Decay}          & Batch Size            & \makecell{Warmup \\ Steps}      & \makecell{Warmup Start}          & \makecell{LR\\Scheduler}            & LR                    & LR Min                & Dropout            & Droppath             & BLAST $b$          \\ \hline
ImageNet  & ViT-B                  & 35          & -         & 0.05                  & 1024                  & 0                 & N/A                  & cosine                  & 2e-4                  & 1e-5                  & 0                  & 0.1                    & 3 or 12                 \\ \hline
SlimPajama & Llama-7B & - & 400 & 0.0 & 144 & 12 & 1.67e-5 & cosine & 2e-4 & 0 & 0 & 0 & 16 \\ \hline
\end{tabular}
}
    \caption{Hyperparameters used in re-training.}
    \label{tab:finetuning-hyperparam}
\end{table}

\paragraph{Diffusion Model}
We compressed the DiT-XL model with $2\times 2$-sized patches \citep{peebles2023scalable}, pre-trained on the $256\times 256$ ImageNet training samples. 
A DiT model is a variant of Vision Transformer \citep{dosovitskiy2020image} with additional adaptive layer normalization (adaLN)~\citep{perez2018film}.
Here, we compressed the stacked query, key, and value weights, the first fully connected layer of the feed-forward module, and the adaLN projection layers by BLAST$_9$ or low-rank matrices. 
All weights were compressed by \Cref{algo:blast-facto-precgd} with $K=500$ and $\delta_0=0.1$. We linearly decayed the step size from 1.0 to 0.0.
The parameters of the BLAST and the low-rank matrices were set to have the desired compression ratio in total, i.e., we remove 50\% (or 20\%) of the total parameters out of the network.
We present the summary in \Cref{tab:blast-dit-50,tab:blast-dit-20}.
We generated 50,000 images using the original, the low-rank-compressed, and the BLAST-compressed DiT.

\begin{table}[ht]
\centering
\begin{tabular}{l|ccccc}
\multicolumn{1}{c|}{Layer Type}       & $m$   & $n$  & $b$ & $r$  & Layer Indices \\ \hline
\texttt{QKV\_proj}    & 3456  & 1152 & 9  & 384  & 0-27          \\
\texttt{FC1}    & 4608  & 1152 & 9  & 256  & 0-27          \\
\texttt{adaLN\_proj} & 6912 & 1152 & 9  & 256 & 0-27         \\
\end{tabular}
\caption{Hyperparameters used for the BLAST$_{9}$-compressed DiT-XL/2 with 50\% compression ratio. $m,n$: size of the original matrix, $b$: number of row/column partitions, $r$: BLAST rank parameter, Layer Indices: indices of layers that the BLAST matrix replaces the weight.}
\label{tab:blast-dit-50}
\end{table}

\begin{table}[ht]
\centering
\begin{tabular}{l|ccccc}
\multicolumn{1}{c|}{Layer Type}       & $m$   & $n$  & $b$ & $r$  & Layer Indices \\ \hline
\texttt{QKV\_proj}    & 3456  & 1152 & 9  & 512  & 0-27          \\
\texttt{FC1}    & 4608  & 1152 & 9  & 640  & 0-27          \\
\texttt{adaLN\_proj} & 6912 & 1152 & 9  & 768 & 0-27         \\
\end{tabular}
\caption{Hyperparameters used for the BLAST$_{9}$-compressed DiT-XL/2 with 20\% compression ratio. $m,n$: size of the original matrix, $b$: number of row/column partitions, $r$: BLAST rank parameter, Layer Indices: indices of layers that the BLAST matrix replaces the weight.}
\label{tab:blast-dit-20}
\end{table}

\paragraph{FID, sFID and IS Evaluation}
We sampled the novel images using DDPM sampler \citep{ho2020denoising} for FID, sFID, and IS evaluation in \Cref{tab:dit-results}. The step size was set to $250$ for each model. Then, the FID, sFID, and IS were computed between each pool of generated images and the 50,000 ImageNet validation images to estimate the distributional discrepancy between the target and the generated samples.

\paragraph{Large Language Model}
For the large language model compression, we used the Llama-7B pre-trained model, publicly available at \url{https://huggingface.co/huggyllama/llama-7b}. 
For the 50\% compression ratio, we compressed all the weights in the main modules of Llama-7B with BLAST with the parameters described in \Cref{tab:blast-llama-50}.
For the 20\% and 10\% compression ratios, we compressed the weights of \texttt{Q\_proj, K\_proj, gate\_proj, up\_proj} layers to match the target compression ratio of the total parameter counts.
Also, by following \citep{peng2024data}, we compress \texttt{Q\_proj} and \texttt{K\_proj} layers for the first 10 attention modules.
The parameters used in the experiment are summarized in \Cref{tab:blast-llama-50,tab:blast-dit-20,tab:blast-llama-10}.
All weights were compressed by \Cref{algo:blast-facto-precgd} with $K=300$ and $\delta_0=0.1$. We linearly decayed the step size from 1.0 to 0.0.
The factorization process takes 3.38 GPU hours for the BLAST weights of $b=16$ on NVIDIA A40 GPUs. 

To re-train the compressed Llama models, we used a subset\footnote{\url{https://huggingface.co/datasets/DKYoon/SlimPajama-6B}} of the SlimPajama dataset \citep{cerebras2023slimpajama} for 400 steps using 0.47B tokens. The global batch size was set to 576, and the models were trained on 4 NVIDIA L40S GPUs. See \Cref{tab:finetuning-hyperparam} for details.

We used Language Model Evaluation Harness\footnote{\url{https://github.com/EleutherAI/lm-evaluation-harness}} \citep{eval-harness} for the zero-shot classification accuracy evaluation.

\begin{table}[ht]
\centering
\begin{tabular}{l|ccccc}
\multicolumn{1}{c|}{Layer Type}       & $m$   & $n$  & $b$ & $r$  & Layer Indices \\ \hline
\texttt{Q\_proj}    & 4096  & 4096 & 16  & 1024  & 0-31          \\
\texttt{K\_proj}    & 4096  & 4096 & 16  & 1024  & 0-31          \\
\texttt{V\_proj}    & 4096  & 4096 & 16  & 1024  & 0-31          \\
\texttt{O\_proj}    & 4096  & 4096 & 16  & 1024  & 0-31          \\
\texttt{gate\_proj} & 11008 & 4096 & 16  & 1488 & 0-31         \\
\texttt{up\_proj}   & 11008 & 4096 & 16  & 1488 & 0-31         \\
\texttt{down\_proj}   & 4096 & 11008 & 16  & 1488 & 0-31        
\end{tabular}
\caption{Hyperparameters used for the BLAST$_{16}$-compressed Llama-7B with 50\% compression ratio. $m,n$: size of the original matrix, $b$: number of row/column partitions, $r$: BLAST rank parameter, Layer Indices: indices of layers that the BLAST matrix replaces the weight.}
\label{tab:blast-llama-50}
\end{table}

\begin{table}[ht]
\centering
\begin{tabular}{l|ccccc}
\multicolumn{1}{c|}{Layer Type}       & $m$   & $n$  & $b$ & $r$  & Layer Indices \\ \hline
\texttt{Q\_proj}    & 4096  & 4096 & 16  & 496  & 0-31          \\
\texttt{K\_proj}    & 4096  & 4096 & 16  & 496  & 0-31          \\
\texttt{gate\_proj} & 11008 & 4096 & 16  & 2048 & 10-31         \\
\texttt{up\_proj}   & 11008 & 4096 & 16  & 2048 & 10-31        
\end{tabular}
\caption{Hyperparameters used for the BLAST$_{16}$-compressed Llama-7B with 20\% compression ratio. $m,n$: size of the original matrix, $b$: number of row/column partitions, $r$: BLAST rank parameter, Layer Indices: indices of layers that the BLAST matrix replaces the weight.}
\label{tab:blast-llama-20}
\end{table}

\begin{table}[ht]
\centering
\begin{tabular}{l|ccccc}
\multicolumn{1}{c|}{Layer Type}       & $m$   & $n$  & $b$ & $r$  & Layer Indices \\ \hline
\texttt{Q\_proj}    & 4096  & 4096 & 16  & 1024  & 0-31          \\
\texttt{K\_proj}    & 4096  & 4096 & 16  & 1024  & 0-31          \\
\texttt{gate\_proj} & 11008 & 4096 & 16  & 2368 & 10-31         \\
\end{tabular}
\caption{Hyperparameters used for the BLAST$_{16}$-compressed Llama-7B with 10\% compression ratio. $m,n$: size of the original matrix, $b$: number of row/column partitions, $r$: BLAST rank parameter, Layer Indices: indices of layers that the BLAST matrix replaces the weight.}
\label{tab:blast-llama-10}
\end{table}

\section{Additional Experimental Results}\label{sec:additional-exp}

\subsection{Synthetic Experiments on BLAST Factorization} \label{sec:blast-emp-study}
In this experiment, we test the factorization algorithms discussed in \Cref{sec:blast-learning}, similar to \Cref{fig:convergence-gd-vs-precgd} but with a different target matrix.
To be specific, we synthesize a $256\times 256$-sized BLAST$_{16}$ (i.e., $b=16$) target matrix with $r^*=8$.
Then, we compare the error curve along the iterates of the gradient descent without preconditioning (GD) in \Cref{eq:blast-gd-u,eq:blast-gd-v,eq:blast-gd-s}, and the preconditioned gradient descent (PrecGD) in \Cref{algo:blast-facto-precgd}.
Here, we consider the exact parameterization setting when $r=r^*=8$ and the over-parameterized setting $r=32>r^*$.
In \Cref{fig:synthetic_exp}, unlike the low-rank target matrix, GD does not converge in both cases.
However, the preconditioned version easily finds the low-error solution for the exact parameterization.
For the overparameterized case, the preconditioned gradient descent method in \Cref{algo:blast-facto-precgd} achieved in two orders of magnitude improvement from the simple GD.

\begin{figure}[t!]
    \centering
    \begin{minipage}{.69\linewidth}
        \centering
        \includegraphics[width=\linewidth]{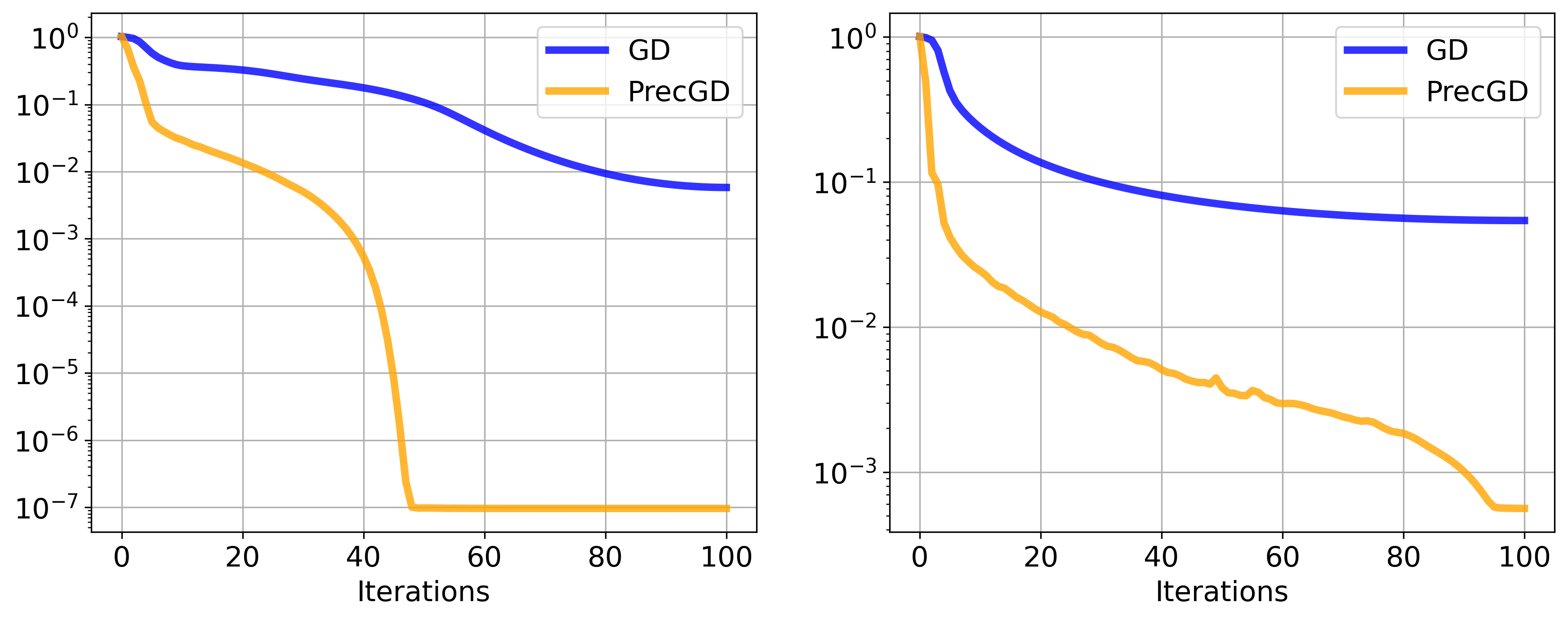}
        \caption*{BLAST $\to$ BLAST.}
    \end{minipage}
    \caption{Plots of normalized reconstruction errors using the BLAST factorization with GD and GD with preconditioning steps (PrecGD) in both exact and rank overparameterized settings, when the target matrix is BLAST$_{16}$.
    Left: Reconstruction errors when $r=r^*$. Right: Reconstruction errors when $r>r*$.
    }  
    \label{fig:synthetic_exp}
\end{figure}

\begin{figure}[t!]
    \centering
    \includegraphics[width=0.955\textwidth]{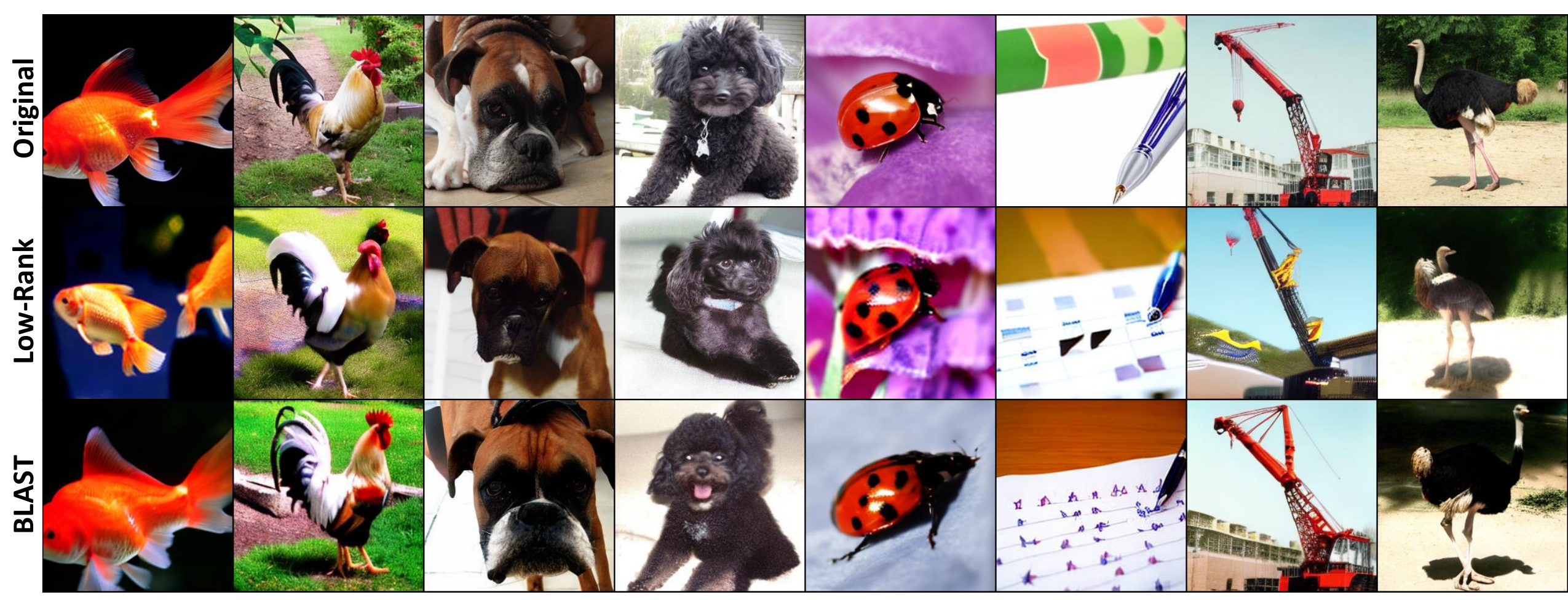}
    \caption{Examples of generated images using both low-rank and BLAST decompositions. Both methods compress the original model by 20\%. }
    \label{fig:dit_blast2}
\end{figure}

\begin{figure}[t!]
    \centering
    \includegraphics[width=0.955\textwidth]{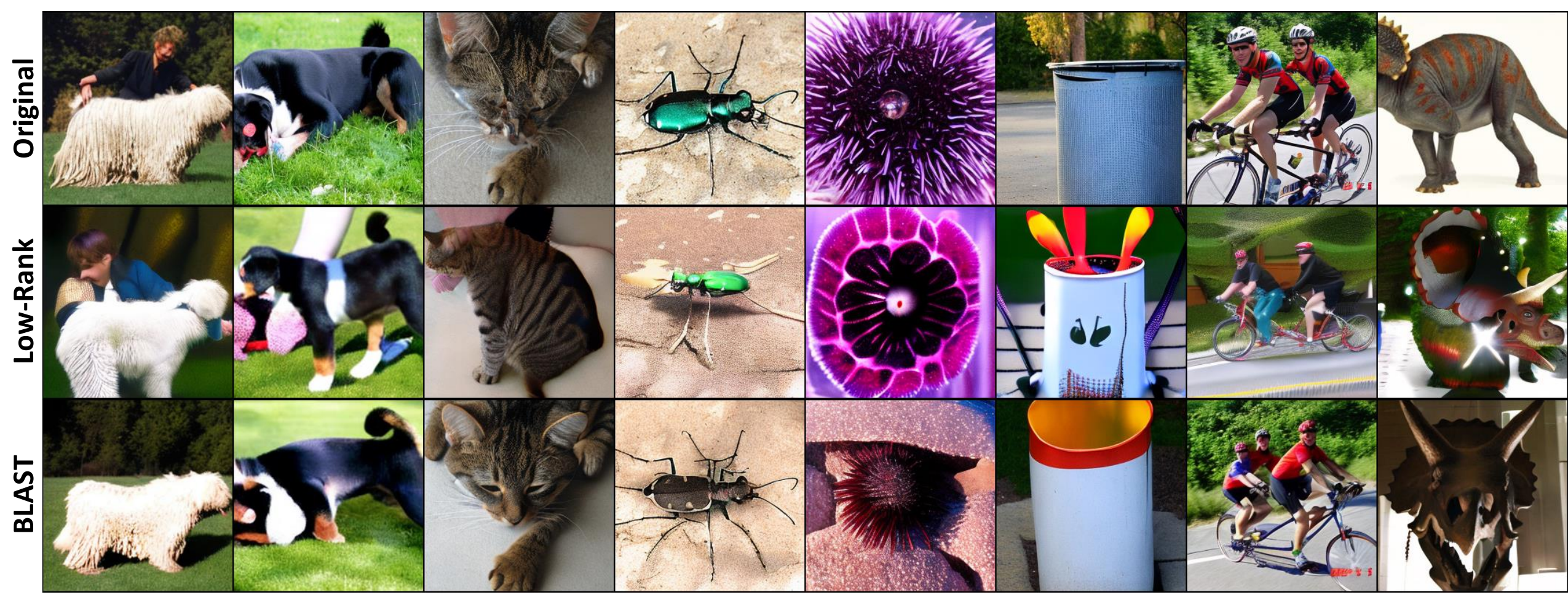}
    \caption{More examples of generated images using both low-rank and BLAST decompositions. Both methods compress the original model by 20\%. }
    \label{fig:dit_blast3}
\end{figure}

\begin{figure}[t!]
    \centering
    \includegraphics[width=0.955\textwidth]{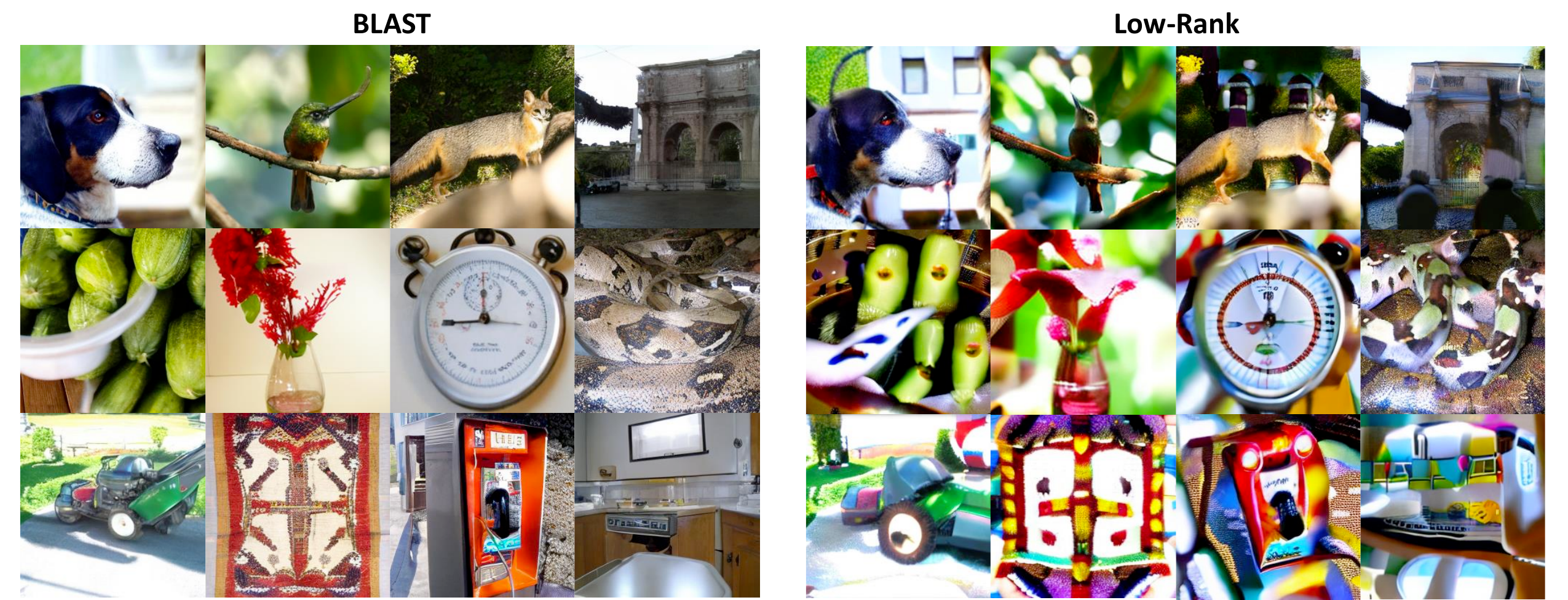}
    \caption{Comparison of the images generated by the BLAST and low-rank compressed models. Overall, the low-rank approximated model often generates unrealistic images, which contributes to low scores evident in Table~\ref{tab:dit-results}.}
    \label{fig:dit_lr_compare}
\end{figure}

\subsection{Additional Results on Diffusion Model Compression}\label{sec:dit-additional-results}
We include extended experimental results of the diffusion model compression in \Cref{sec:compression-experiment}

\paragraph{Compression-only} In \Cref{fig:dit_blast2,fig:dit_blast3}, the additional image samples of original uncompressed, low-rank-compressed, and BLAST$_9$-compressed DiT~\citep{peebles2023scalable} models are presented.
The images in the same column were sampled using $250$ DDIM steps, starting from the same noise vector.
The compression ratio was set to 20\% for both models.
The figures show that the outputs of the model compressed by BLAST maintain similar features and perceptual quality to the outputs of the original DiT.

\begin{figure}[ht]
    \centering
    \includegraphics[width=\textwidth]{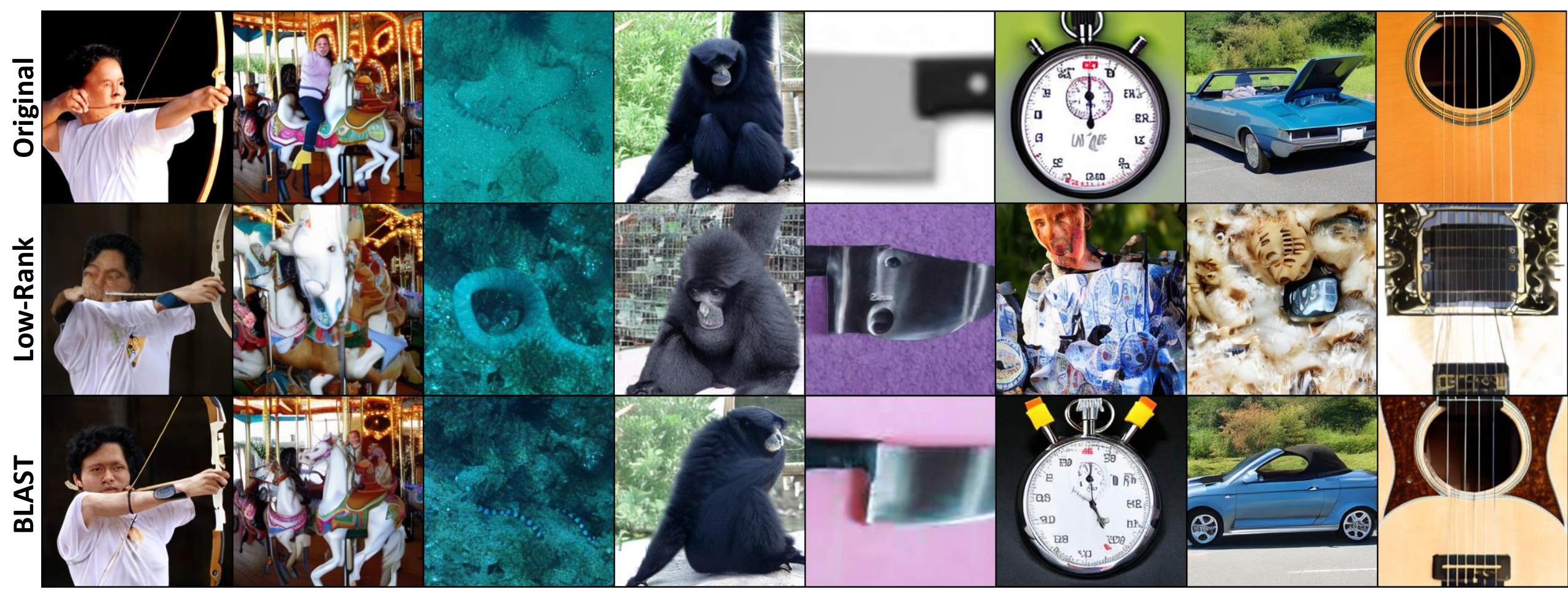}
    \caption{Examples of generated images using DiT \citep{peebles2023scalable} starting from the same noise vectors and a deterministic solver. The original model is compressed by 50\% through BLAST or Low-Rank matrices and re-trained for 10 epochs on ImageNet. The images from the model compressed via BLAST preserves the quality of the images of the original model, whereas the images generated by the low-rank model contain artifacts.}
    \label{fig:dit_retrain_additional}
\end{figure}

\paragraph{Compression and re-training}
We present additional samples from the low-rank and BLAST DiT models at the 50\% compression ratio after \textit{re-training} in \Cref{fig:dit_retrain_additional}. Similar to \Cref{fig:dit_motivation}, the images generated by the low-rank DiT lose significant image quality, whereas the images from the BLAST DiT preserve the quality and semantics of the original samples.

\paragraph{Evidence of low performance of low-rank-compressed model}
Some images generated by the 20\% low-rank-compressed model (\Cref{fig:dit_lr_compare}-left) are highly unrealistic and have inferior quality compared to the images generated by original and BLAST-compressed models (\Cref{fig:dit_lr_compare}-right). 
We observe that these samples contribute to the low scores in \Cref{tab:dit-results}. These samples were computed using $250$ DDPM steps, as done by the original DiT work~\citep{peebles2023scalable}.

\subsection{Additional Results on Large Language Model Compression}\label{sec:additional-exp-llm}

\paragraph{Compression-only}
We report the performance of LLM-Pruner \citep{ma2023llm} and Joint Rank-$k$ \citep{peng2024data} for the same compression task. LLM-Pruner \citep{ma2023llm} identifies sparse weights \textit{with} data to pinpoint unimportant neurons. Joint Rank-$k$ \citep{peng2024data} performs a low-rank approximation \textit{jointly} on weight matrices with similar column spaces by stacking them and applying a truncated SVD.

In \Cref{tab:llama1}, we compare the performance degradation of LLM-Pruner, Joint Rank-$k$, Low-Rank, BLAST$_2$, and BLAST$_{16}$, as well as their absolute performance. The first four rows represent the zero-shot performance of Llama-7B \citep{touvron2023llama} from the literature, while the row marked with an asterisk ($^*$) indicates our results.

For 10\% compression, Joint Rank-$k$ achieved the lowest performance degradation, although BLAST$_{16}$ also exhibited a similar performance drop. When the model is compressed by 20\%, BLAST$_{16}$ surpasses Joint Rank-$k$ \citep{peng2024data}, LLM-Pruner \citep{ma2023llm}, and low-rank schemes. The zero-shot accuracy versus compression ratio curve in \Cref{fig:llama7b-curve} shows that BLAST compression results in less performance drop for the same compression ratio.

\paragraph{Compression and Re-training}
In \Cref{tab:llama-detail-retrained}, we present the performance of each common sense reasoning benchmark which we report their average in \Cref{tab:llama-7b-new}.

\begin{table}[t]
\resizebox{\linewidth}{!}{
\bgroup
\begin{tabular}{rl|ccccccc|c}
\hline
CR     & Method           & \multicolumn{1}{c}{PIQA} & \multicolumn{1}{c}{HellaSwag} & \multicolumn{1}{c}{Winogrande} & \multicolumn{1}{c}{BoolQ} & \multicolumn{1}{c}{OBQA} & \multicolumn{1}{c}{ARC-e} & \multicolumn{1}{c|}{ARC-c} & \multicolumn{1}{c}{Average} \\ \hline
\multirow{4}{*}{0\%}  & LLaMA-7B\citep{touvron2023llama}         & 79.8                      & 76.1                           & 70.1                            & 76.5                       & 57.2                      & 72.8                       & 47.6                       & 68.59                        \\  
                      & LLaMA-7B\citep{ma2023llm}     & 78.35                     & 72.99                          & 67.01                           & 73.18                      & 42.40                     & 67.45                      & 41.38                      & 63.25                        \\ 
                      & LLaMA-7B\citep{peng2024data}     & 77.64                     & 73.08                          & 62.12                           & 69.33                      & 43.40                     & 66.31                      & 37.63                      & 61.36                        \\ 
                      & LLaMA-7B$^*$     & 79.16                     & 76.19                          & 70.09                           & 75.11                      & 44.4                      & 72.9                       & 44.71                      & 66.08                       \vspace{2pt} \\ \hline
 \multirow{3}{*}{10\%} & Joint Rank-$k$\citep{peng2024data}   & 76.93(-0.71)              & \underline{71.67(-1.41)}          & \underline{62.27(+0.15)}                    & 67.58(-1.75)              & 43.00(-0.40)              & \underline{66.49(+0.18)}               & \underline{36.61(-1.02)}               & \underline{60.62(-0.74)}                \\\cline{2-10}
 & Joint Rank-$k$$^*$\citep{peng2024data}   & 77.91(-1.25) &71.78(-4.41)	&68.98(-1.11)	&74.01(-1.10)	&\underline{\textbf{44.80(+0.40)}}	&\textbf{72.43(-0.47)}	&41.72(-2.99)	&64.52(-1.56)             \\
 & Monarch$^*$\citep{dao2022monarch} & 77.37(-1.79) &	69.10(-7.09) &	67.96(-2.13) &	71.71(-3.40) &	43.60(-0.80)&	69.57(-3.33)& 	40.61(-4.10)& 	62.85(-3.23) \\
 & BLAST$_{16}^*$ & \underline{\textbf{78.78(-0.38)}}	& \textbf{74.22(-1.97)}	& \underline{\textbf{70.24(+0.15)}}	& \underline{\textbf{76.12(+1.01)}}	& 42.00(-2.40)	& 71.21(-1.69)	& \textbf{43.26(-1.45)}	& \textbf{65.12(-0.96)} \vspace{2pt} \\  \hline
 \multirow{6}{*}{20\%} & LLM-Pruner\citep{ma2023llm}   & 75.68(-2.67)              & \underline{66.80(-6.19)}          & 59.83(-7.18)                    & 57.06(-16.12)              & 40.00(-2.40)              & 60.94(-6.51)               & 36.52(-4.86)               & 56.69(-6.56)                 \\ 
                      & Joint Rank-$k$\citep{peng2024data} & 75.08(-2.56)              & 64.57(-8.51)                   & \underline{60.46(-1.66)}           & 62.20(-7.13)               & \underline{43.00(-0.40)}     & 61.73(-4.58)               & \underline{34.24(-3.39)}      & 57.33(-4.03)                 \\ \cline{2-10}
                      & Joint Rank-$k$$^*$\citep{peng2024data} & 75.90(-3.26)	& 65.53(-10.66)	& 66.85(-3.24) &	67.58(-7.53) &	42.20(-2.20) &	66.79(-6.11) & 	38.65(-6.06)& 	60.50(-5.58) \\
                      & Low-Rank$^*$     & 75.30(-3.86)              & 63.20(-12.99)                  & 65.11(-4.98)                    & 66.64(-8.47)               & 42.20(-2.20)              & 65.91(-6.99)               & 38.65(-6.06)               & 59.57(-6.51)                 \\ 
                      & Monarch$^*$\citep{dao2022monarch} & 72.31(-6.85)&	42.38(-33.81)&	54.85(-15.24)&	62.20(-12.91)&	31.40(-13.00)&	51.47(-21.43)&	27.73(-16.98)&	48.91(-17.17)  \\
                      & BLAST$_{2}^*$        & 76.12(-3.04)	& 66.29(-9.90)	& 65.19(-4.90)	& 72.17(-2.94) &{43.60(-0.80)} &	67.26(-5.64) & 	40.19(-4.52) &	61.55(-4.53) \\
                      & BLAST$_{16}^*$        & \underline{\textbf{77.48(-1.68)}}     & \textbf{69.74(-6.45)}                   & \textbf{68.03(-2.06)}                    & \underline{\textbf{72.45(-2.66)}}      & \underline{\textbf{44.00(-0.40)}}     & \underline{\textbf{68.64(-4.26)}}      & \textbf{40.27(-4.44)}               & \underline{\textbf{62.94(-3.14)}}      \vspace{2pt}    \\  \hline
\end{tabular}
\egroup
}
\caption{Zero-shot performance of LLaMA-7B with various compression methods \textit{without} re-training. 
All models are \textit{not} post-trained. CR denotes compression ratio. \textbf{Bold} indicates the best performance under the same compression ratio. \underline{Underline} refers to the lowest performance drop. BLAST$_{b}$ indicates the BLAST matrix with $b\times b$ number of blocks. The mark $^*$ represents the results from our experiment.
}
\label{tab:llama1}
\end{table}

\begin{table}[ht]
\resizebox{\linewidth}{!}{
\bgroup
\begin{tabular}{ll|lllllll|l}
\hline
\multicolumn{1}{r}{CR}   & Method          & \multicolumn{1}{c}{PIQA}         & \multicolumn{1}{c}{HellaSwag}    & \multicolumn{1}{c}{Winogrande}   & \multicolumn{1}{c}{BoolQ}        & \multicolumn{1}{c}{OBQA}         & \multicolumn{1}{c}{ARC-e}        & \multicolumn{1}{c|}{ARC-c}        & \multicolumn{1}{c}{Average}      \\ \hline
\multicolumn{1}{r}{0\%}  & LLaMA-7B        & \multicolumn{1}{c}{79.16}        & \multicolumn{1}{c}{76.23}        & \multicolumn{1}{c}{69.93}        & \multicolumn{1}{c}{75.14}        & \multicolumn{1}{c}{44.4}         & \multicolumn{1}{c}{72.9}         & \multicolumn{1}{c|}{44.71}        & \multicolumn{1}{c}{66.07}        \\ \hline
\multicolumn{1}{r}{20\%} & BLAST$_{16}$    & \multicolumn{1}{c}{77.97(-1.19)} & \multicolumn{1}{c}{72.88(-3.35)} & \multicolumn{1}{c}{69.30(-0.63)} & \multicolumn{1}{c}{72.63(-2.51)} & \multicolumn{1}{c}{41.20(-3.20)} & \multicolumn{1}{c}{70.33(-2.57)} & \multicolumn{1}{c|}{41.81(-2.90)} & \multicolumn{1}{c}{63.73(-2.34)} \\ \hline
\multirow{4}{*}{50\%}     & Low-Rank        & 66.16(-13.00)                    & 48.31(-27.92)                    & 54.78(-15.15)                    & 65.38(-9.76)                     & 31.00(-13.40)                    & 45.41(-27.49)                    & 27.73(-16.98)                     & 48.40(-17.67)                    \\
                         & Monarch         & 50.71(-28.45)                    & 26.17(-50.06)                    & 49.33(-20.60)                    & 37.86(-37.28)                    & 26.40(-18.00)                    & 26.64(-46.26)                    & 28.07(-16.64)                     & 35.03(-31.04)                    \\
                         & Block-Diagonal  & 50.38(-28.78)                    & 26.24(-49.99)                    & 50.59(-19.34)                    & 37.83(-37.31)                    & 24.80(-19.60)                    & 26.35(-46.55)                    & 27.82(-16.89)                     & 34.86(-31.21)                    \\ \cline{2-10}
                         & BLAST$_{16}$      & 73.83(-5.33)                     & 63.59(-12.64)                    & 63.38(-6.55)                     & 68.62(-6.52)                     & 34.60(-9.80)                     & 57.24(-15.66)                    & 32.34(-12.37)                     & 56.23(-9.84)                    \\ \hline
\end{tabular}
\egroup
}
\caption{Zero-shot performance of LLaMA-7B with various compression methods \textit{after} re-training. 
CR denotes compression ratio.  BLAST$_{b}$ indicates the BLAST matrix with $b\times b$ number of blocks. 
}
\label{tab:llama-detail-retrained}
\end{table}

\section{Broader Impact}\label{sec:impact}
Our proposed method targets improving the efficiency of the DNN inference. This might have a negative social impact by promoting the accessibility and usability of malicious DNNs such as DeepFake. 
However, at the same time, we expect the BLAST matrix will bring a tremendous positive social impact.
First, it contributes to sustainability by cutting down the energy consumption for the DNN inference.
Moreover, the BLAST matrix can improve the accessibility of AI-based medical, educational, and social services by providing a foundation for running those models on mobile devices.
Therefore, we believe BLAST will give tangible benefits to our society.

\end{document}